%% file: 0Draft.tex
\documentclass[12pt]{article}
\usepackage[margin=1.1in]{geometry}
\input{preamble.tex}

\crefname{paragraph}{part}{parts}
\makeatletter
\def\@maketitle{%
  \newpage
  \begin{center}%
  \let \footnote \thanks
    {\Large \bf \@title \par}%
  \end{center}%
  \par
  \vskip 0.5em}
\makeatother

\title{Adaptive Norm-Based Regularization for Neural Networks}

\begin{document}
\maketitle

\begin{center}
{\large
\begin{tabular}{cc}
\makecell{
Muhammad Qasim$^{\dagger,*}$\\
{\small\texttt{muhammad.qasim@stat.lu.se}}
}
&
\makecell{
Farrukh Javed$^{\dagger}$\\
{\small\texttt{farrukh.javed@stat.lu.se}}
}
\end{tabular}
\\[0.8em]
\normalsize
\begin{tabular}{c}
$^{\dagger}$Department of Statistics, Lund University, Lund, Sweden\\[0.3em]
$^{*}${\small Corresponding author: \texttt{muhammad.qasim@stat.lu.se}}
\end{tabular}
}
\end{center}

\vspace{2em}

\begin{abstract}
In this paper, we study norm-based regularization methods for neural networks. We compare existing penalization approaches and introduce two regularization strategies that extend classical ridge- and lasso-type penalties to neural network models. The first strategy modifies weight decay by incorporating the covariance structure of the input features into a ridge-type $\ell_2$ penalty, allowing regularization to account for feature dependence. The second combines an $\ell_1$ sparsity penalty with covariance-aware $\ell_2$ regularization, producing neural network weights that are both sparse and structurally informed. Monte Carlo simulations are used to evaluate these methods under different data-generating settings, followed by two real-data applications on building cooling-load prediction and leukemia cell-type classification from high-dimensional gene expression data. Across simulated and real-data examples, the proposed regularizers improve predictive performance on unseen data and provide more effective complexity control than standard norm-based penalties, particularly when features are correlated or high-dimensional.
\vskip 2.1em
\noindent \textbf{Keywords:} Neural networks; Adaptive regularization; Sparsity; High-dimensional data; Regression and classification; Gene expression analysis; Building energy prediction.
\end{abstract}

\setcounter{page}{1}
\newpage
\section{Introduction}

Regularization is an important principle in statistical learning in high-dimensional settings where overfitting is a major concern \citep{hoerl1970ridge,tibshirani1996regression}. In neural networks, which often contain millions of parameters, excessive model flexibility can cause the network to learn non-generalizable patterns from the training data, resulting in poor generalization on unseen data \citep{zhang2017understanding}. While collecting more data can help, it is often not feasible.  Regularization techniques, such as $\ell_2$ weight decay (Ridge, \cite{hoerl1970ridge}) and $\ell_1$ penalization (Lasso, \cite{tibshirani1996regression}) have been widely used in both statistics and machine learning to control model complexity and improve generalization \citep{goodfellow2016deep, zhang2023dive}. Figure \ref{fig:error_vs_complexity_curve} illustrates the effect of model complexity on predictive performance. The true relationship between the input and output is quadratic, but three polynomial regression models of increasing complexity were fitted. The linear model provides a misspecified fit to the curved data structure, yielding large errors on both training and testing sets, demonstrating underfitting. The quadratic model closely approximates the true function, with low and similar training and testing errors, representing an appropriate balance between bias and variance. On the other hand, an over-parameterized polynomial fits the noise in the training data, leading to overfitting.

\begin{figure}[ht]
    \centering
    \includegraphics[width=1.0\textwidth]{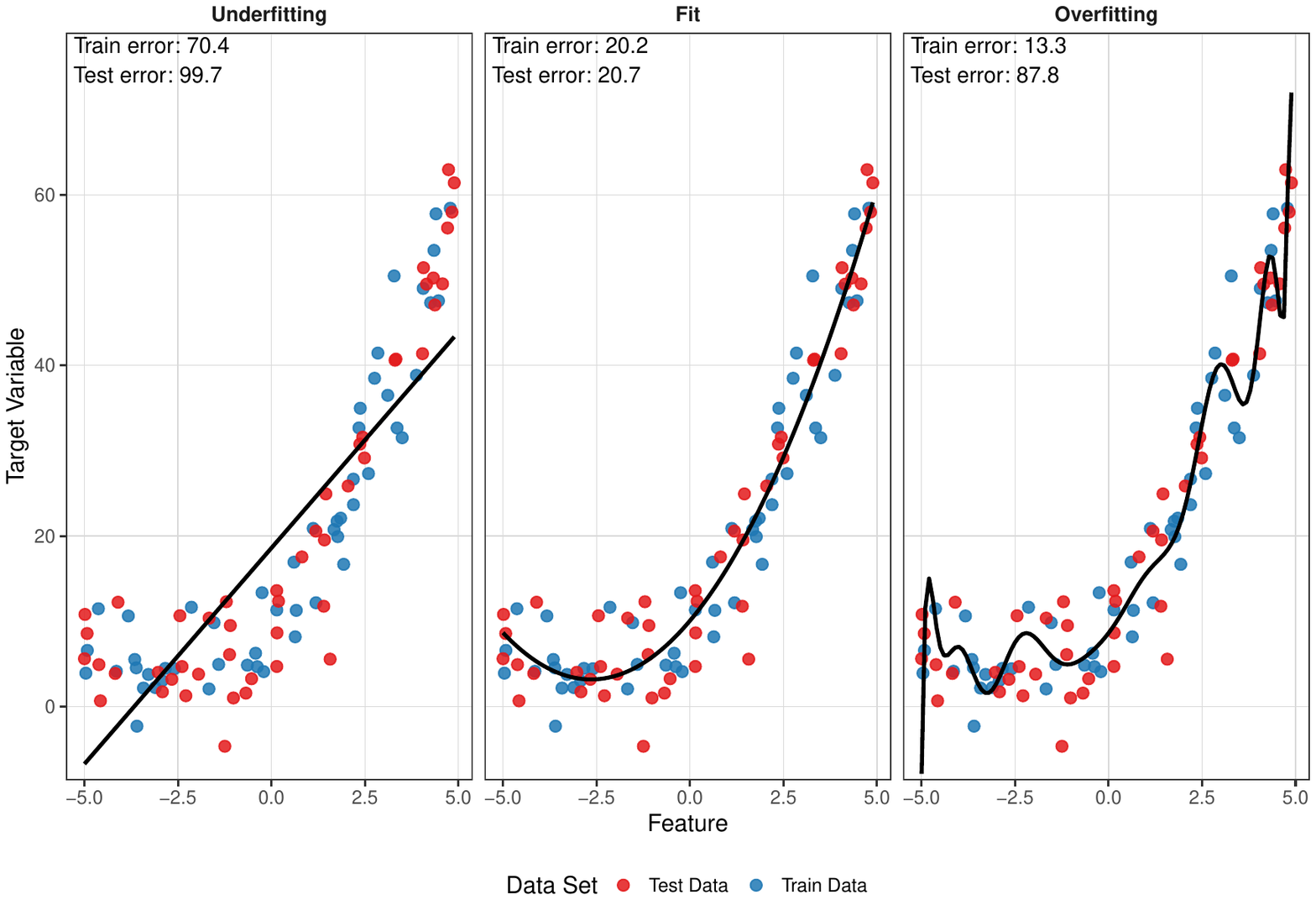} 
     \caption{\small Effect of model complexity on predictive performance using synthetic data. \textit{Left:} Fitted curves from linear (underfitting), quadratic (correct fit), and an overly complex polynomial (overfitting) models with training (blue) and test (red) points.
    }
    \label{fig:error_vs_complexity_curve}
\end{figure}

These challenges have motivated a broad literature on regularization methods for neural networks. In recent years, substantial effort has been devoted to improving the interpretability of neural networks and controlling overfitting in order to obtain more reliable predictions and classifications. Regularization is a standard tool for controlling overfitting across a wide range of deep learning architectures, including convolutional neural networks (CNNs) and models for sequential data (e.g., temporal and textual inputs). Accordingly, a variety of regularization techniques have been proposed and empirically assessed across model classes and application domains. \citet{santos2022avoiding} review regularization methods for CNNs and classify them into three broad categories, namely data augmentation, internal modifications, and label-based techniques. Data augmentation artificially expands the training set by applying transformations, noise injection, or perturbations to inputs. This strategy has proven effective in image, text \citep{shorten2021text}, and time-series tasks, as it encourages models to learn more robust, invariant features \citep{shorten2019survey}. Early stopping, which stops training when validation performance stops improving, is another widely used regularization technique. \cite{srivastava2014dropout} stated that the dropout method is an important regularization technique for different types of neural networks. Dropout has also been widely studied as a regularization strategy in neural networks, with empirical evidence suggesting that it can reduce overfitting and improve predictive performance. However, a commonly noted practical drawback is that dropout-trained models often require longer training times than standard networks with the same architecture. Moreover, dropout primarily acts as a training-time mechanism, whereas prediction is typically carried out using the full network \citep{ma2019transformed}.
Batch normalization \citep{ioffe2015batch} has been extensively investigated in deep learning, with numerous empirical studies reporting improved training stability and faster convergence, often allowing the use of larger learning rates and reducing sensitivity to initialization. Several works also note that, in some settings, batch normalization can reduce reliance on dropout. In a related direction, \citet{tomar2014manifold} propose a manifold learning-based regularization framework for deep neural networks and evaluate it on an automatic speech recognition speech-in-noise task, reporting word error rates in comparison with bottleneck networks trained without manifold constraints.

Several factors contributing to overfitting have been highlighted in the literature. For example, \citet{bejani2021systematic} discusses a range of sources, including heterogeneous pattern complexity that may call for different model capacities \citep{caruana2000overfitting}, biased or imbalanced training samples \citep{erhan2010does}, and the effect of non-negligible estimation variance \citep{cawley2010over}. Motivated by these observations, the simulation study considers a collection of settings designed to reflect such mechanisms and to assess how different regularized methods behave under representative overfitting scenarios.

Among the available approaches, norm-based regularization remains particularly useful because it directly controls model complexity through the training objective. Early work by \citet{krogh1991simple} reported that norm-based regularization, such as weight decay, can improve generalization in feed-forward neural networks, and subsequent studies have further documented its effectiveness for controlling weight magnitudes and improving predictive performance \citep{ishikawa1996structural, wu2006convergence}. Related contributions have explored adaptive variants and alternative penalties. For instance, \citet{bejani2019convolutional} develop an adaptively regularized CNN using adaptive dropout together with weight decay to mitigate overfitting in classification. \citet{bejani2020theory} propose an SVD-based stabilization approach in which a Tikhonov-type term is incorporated into the training objective to regularize weight updates toward an SVD approximation. \citet{ma2019transformed} introduce a non-convex transformed $\ell_1$ penalty to promote sparsity and remove non-informative connections in deep networks, while \citet{wu2014batch} study an $\ell_{1/2}$ penalty for connection pruning, motivated by its stronger sparsity-inducing behaviour relative to $\ell_1$. More recently, \citet{lemhadri2021lassonet} propose \textsc{LassoNet}, which encourages feature sparsity through a skip-connection mechanism that restricts feature participation in hidden layers unless the corresponding skip coefficient is active. Finally, \citet{zhang2019novel} consider combined $\ell_1$ and $\ell_2$ penalties for input-weight shrinkage in multilayer perceptrons.

These developments also connect closely to the broader literature on sparsity-inducing regularization. Sparse optimization--based approaches aim to simplify deep NNs by incorporating sparsity-inducing penalties during training, thereby encouraging many parameters to become exactly zero. In contrast to post hoc pruning procedures that remove weights after training, these methods learn a sparse structure directly through the optimization problem. A commonly noted advantage is that the resulting sparse network can be deployed at inference without requiring a separate pruning step, whereas dropout is primarily a training-time mechanism. Moreover, beyond model compression, several studies report that optimization-driven sparsity can, in some settings, match or even improve predictive performance relative to the corresponding dense networks \citep{alvarez2016learning, yoon2017combined}. Accordingly, $\ell_1$-norm regularization remains one of the most widely used convex surrogates for inducing sparsity, and it can be handled efficiently in many optimization settings to yield sparse solutions \citep{yu2014click, sun2016design, cui2016high, zhang2019feature, lemhadri2021lassonet}. However, in most existing work, the penalty structure is primarily first-order in nature and does not explicitly incorporate second-order information, such as covariance or Gram structure of the learned representations. There remains a need for regularization methods that penalize model complexity in a data-adaptive manner while maintaining or improving predictive accuracy, particularly in high-dimensional or noisy regimes.

\subsection{Contributions of this study}
The paper makes three main contributions.
\begin{itemize}

    \item The primary aim of this paper is to develop statistical foundations for geometry-aware regularization in deep learning by introducing and analyzing data-adaptive penalties for neural networks. Motivated by the need to align shrinkage with the second-order structure of learned representations, we propose two methods: \textit{Covridge}, which combines covariance-weighted quadratic shrinkage with  $\ell_2$ weight decay, and \textit{Sparridge} replaces the $\ell_2$ term with an $\ell_1$ penalty, combining sparsity with covariance-weighted shrinkage based on the empirical Gram structure.

    \item  The proposed methods address limitations of standard regularization in neural networks by introducing covariance-weighted, data-adaptive penalties. We provide theoretical results characterizing their large-sample behaviour and clarifying their impact on shrinkage geometry and model complexity.

    \item We conduct a Monte Carlo study that systematically compares norm-based regularization methods across varying sample sizes, dimensions, correlation structures, noise levels, and degrees of nonlinearity.

\end{itemize}

\subsection{Notation}

Here we introduce the notation used throughout the paper. Let $q \in [1, \infty)$ and let $\bm{x} = (x_1, \ldots, x_p)^\top \in \mathbb{R}^p$ denote a $p$-dimensional vector. The $\ell_q$ norm of $\bm{x}$ is defined as $\|\bm{x}\|_q \coloneqq \left(\sum_{j=1}^p |x_j|^q\right)^{1/q}$. The $\ell_\infty$ norm is given by $\|\bm{x}\|_{\infty} \coloneqq \max_{j \in [p]} |x_j|$, and the $\ell_0$ pseudo-norm, which counts the number of nonzero entries in $\bm{x}$, is defined as $\|\bm{x}\|_0 \coloneqq \sum_{j=1}^p \mathbbm{1}_{\{x_j \neq 0\}}$, where $\mathbbm{1}(\cdot)$ denotes the indicator function, which equals 1 if the condition holds and 0 otherwise.  
For a real-valued matrix $\bm{A} \in \mathbb{R}^{m \times n}$, the Frobenius norm is defined as $\|\bm{A}\|_F \coloneqq \left( \sum_{i=1}^m \sum_{j=1}^n a_{ij}^2 \right)^{1/2}$, which corresponds to the $\ell_2$ norm of $\bm{A}$ when viewed as a vector in $\mathbb{R}^{mn}$.  If $\bm{A}$ and $\bm{B}$ are two matrices, their inner product is defined as $\langle \bm{A}, \bm{B} \rangle \coloneqq \operatorname{tr}(\bm{A}^\top \bm{B})= \sum_{i=1}^m \sum_{j=1}^n a_{ij} b_{ij}$, where $\operatorname{tr}(\cdot)$ denotes the trace operator. Bold lowercase letters (e.g., $\bm{x}$, $\bm{w}$) denote vectors, and bold uppercase letters (e.g., $\bm{W}$, $\bm{A}$) denote matrices.
 

\section{Model Setup} \label{sec:model}

We consider a supervised learning problem where the observed data consist of input-target pairs $\{(\boldsymbol{x}_i, y_i)\}_{i=1}^n$, with $\boldsymbol{x}_i \in \mathbb{R}^p$ and $y_i \in \mathbb{R}$. Let $\boldsymbol{X} \in \mathbb{R}^{n \times p}$ denote the input matrix and $\boldsymbol{y} \in \mathbb{R}^n$ the response vector. The prediction function is modeled by a feedforward neural network $f(\cdot\,; \boldsymbol{\theta}) : \mathbb{R}^p \rightarrow \mathbb{R}$, parameterized by $\boldsymbol{\theta}$,  which collects all learnable weight matrices and offset vectors\footnote{The term \textit{bias} is commonly used in neural networks literature to refer to the offset vector added in each layer. However, this should not be confused with statistical bias, which refers to the difference between an estimator's expected value and the true parameter. To avoid confusion, we refer to these layer-specific vectors as offsets \citep{lindholm2022machine}.} $\boldsymbol{b},$ across the network. We solve the following optimization problem to find the optimal values for $\boldsymbol{\theta}$

\begin{equation}
\hat{\boldsymbol{\theta}} = \arg\min_{\boldsymbol{\theta}} J(\boldsymbol{\theta}),
\end{equation}

where $J(\boldsymbol{\theta}) = \frac{1}{n} \sum_{i=1}^n\mathcal{L}\left(y_i, f(\mathbf{x}_i; \boldsymbol{\theta})\right)$ is the cost function, $\mathcal{L}(\cdot, \cdot)$ is a pointwise loss, typically the squared error loss $\ell(y, \hat{y}) = (y - \hat{y})^2$ in regression problems and $f(\mathbf{x}_i; \boldsymbol{\theta})$ is the output of the neural network. Figure \ref{fig:deepnn} illustrates the structure of a feedforward neural network to provide a clear overview of the general deep neural framework.
\begin{figure}[htp]
\centering
\begin{tikzpicture}[scale=1, every node/.style={scale=1},
    input neuron/.style={
        circle, draw=green, fill=green!20,
        minimum size=0.9cm, inner sep=0pt
    },
    hidden neuron/.style={
        circle, draw=blue, fill=blue!20,
        minimum size=0.9cm, inner sep=0pt
    },
    output neuron/.style={
        circle, draw=orange, fill=orange!20,
        minimum size=0.9cm, inner sep=0pt
    },
    bias neuron/.style={
        circle, draw=gray, fill=gray!20,
        minimum size=0.9cm, inner sep=0pt
    },
    arrow style/.style={
        ->, >={latex}, line width=0.3mm, opacity=0.8
    }
]
\foreach \i in {1,2,3}
    \node[input neuron] (I\i) at (0,-\i) {$x_{\i}$};
\node at (0,-4.2) {$\vdots$};
\node[input neuron] (Ip) at (0,-5.4) {$x_p$};
\node[bias neuron] (I0) at (0,0) {$1$};

\foreach \i in {1,2,3}
    \node[hidden neuron] (H1\i) at (2.5,-\i) {$h$};
\node at (2.5,-4.2) {$\vdots$};
\node[hidden neuron] (H1U) at (2.5,-5.4) {$h$};
\node[bias neuron] (H1B) at (2.5,0) {$1$};

\foreach \i in {1,2,3}
    \node[hidden neuron] (H2\i) at (5,-\i) {$h$};
\node at (5,-4.2) {$\vdots$};
\node[hidden neuron] (H2U) at (5,-5.4) {$h$};
\node[bias neuron] (H2B) at (5,0) {$1$};


\node at (5.5,-3) {\rotatebox{90}{$\cdots$}};

\foreach \i in {1,2,3}
    \node[hidden neuron] (HLm1\i) at (7.5,-\i) {$h$};
\node at (7.5,-4.2) {$\vdots$};
\node[hidden neuron] (HLm1U) at (7.5,-5.4) {$h$};
\node[bias neuron] (HLm1B) at (7.5,0) {$1$};

\foreach \i in {1,2,3}
    \node[hidden neuron] (HL\i) at (10,-\i) {$h$};
\node at (10,-4.2) {$\vdots$};
\node[hidden neuron] (HLU) at (10,-5.4) {$h$};
\node[bias neuron] (HLB) at (10,0) {$1$};

\node[output neuron] (O) at (13,-2.5) {$\hat{y}$};

\foreach \i in {0,1,2,3,p}
    \foreach \j in {1,2,3,U}
        \draw[arrow style] (I\i) -- (H1\j);

\foreach \i in {1,2,3,U,B}
    \foreach \j in {1,2,3,U}
        \draw[arrow style] (H1\i) -- (H2\j);

\foreach \i in {1,2,3,U,B}
    \foreach \j in {1,2,3,U}
        \draw[arrow style, dashed] (H2\i) -- (HLm1\j);

\foreach \i in {1,2,3,U,B}
    \foreach \j in {1,2,3,U}
        \draw[arrow style] (HLm1\i) -- (HL\j);

\foreach \i in {1,2,3,U, B}
    \draw[arrow style] (HL\i) -- (O);

\node[align=center] at (0,1) {Input\\variables};
\node[align=center] at (2.5,1) {Hidden\\Layer 1};
\node[align=center] at (4.7,1) {Hidden\\Layer 2};
\node[align=center] at (6.2,1) {{$\cdots$}};
\node[align=center] at (7.8,1) {Hidden\\Layer $L{-}1$};
\node[align=center] at (10,1) {Hidden\\Layer $L$};
\node[align=center] at (13,1) {Output};

\node at (1.3,-7.3) {\shortstack{Layer 1\\$(\boldsymbol{W}^{(1)},\ \boldsymbol{b}^{(1)})$}};
\node at (3.9,-7.3) {\shortstack{Layer 2\\$(\boldsymbol{W}^{(2)},\ \boldsymbol{b}^{(2)})$}};
\node at (6.2,-7.3) {\shortstack{$\cdots$}};
\node at (9,-7.3) {\shortstack{Layer $L{-}1$\\$(\boldsymbol{W}^{(L-1)},\ \boldsymbol{b}^{(L-1)})$}};
\node at (12.3,-7.3) {\shortstack{Layer $L$\\$(\boldsymbol{W}^{(L)},\ \boldsymbol{b}^{(L)})$}};

\end{tikzpicture}
\caption{An illustration of a deep neural network with multiple hidden layers.}
\label{fig:deepnn}
\end{figure}


We follow similar notations to illustrate the neural network, as described in \cite{lindholm2022machine}, with omitted hidden layers represented by ``$\cdots$'' and a dashed arrow between Layer 2 and Layer \( L-1 \). To simplify the understanding of the neural network architecture and its parameters, we split the overall network parameters as 
$$
\boldsymbol{\theta} = (\boldsymbol{W}^{(1)}, \boldsymbol{b}^{(1)}, \boldsymbol{\theta}^*),$$
where $\boldsymbol{W}^{(1)} \in \mathbb{R}^{r_1 \times p}$ and $\boldsymbol{b}^{(1)} \in \mathbb{R}^{r_1}$ denote the weight matrix and offset vector of the first layer, respectively, and $\boldsymbol{\theta}^*$ collects the parameters of the remaining layers. For a generic input $\boldsymbol{x} \in \mathbb{R}^p$, the first-layer transformation is
$$
\boldsymbol{q}^{(1)} = h\left( \boldsymbol{W}^{(1)} \boldsymbol{x}+ \boldsymbol{b}^{(1)} \right), $$
where  $h(\cdot)$ represents the activation function, applied componentwise. In this paper, we take $h$ to be the ReLU activation function, defined by $h(a) = \max(0, a)$.  Similarly, the calculations for the remaining layers follow the same structure. For a detailed mathematical representation of deep neural networks and the backpropagation algorithm, we refer the reader to \cite{petersen2024mathematical} and \cite{lindholm2022machine}.

\subsection{Regularized objective function}

Regularization methods are commonly used to control the complexity of different models, such as neural networks, linear regression, or logistic regression. This is done by adding a penalty term \( \Omega(\boldsymbol{\theta}) \) to the objective function \( J(\boldsymbol{\theta}) \), which limits the model's capacity. The regularized objective function is given by

\begin{equation}
\widetilde{J}(\boldsymbol{\theta}; \boldsymbol{X}, \boldsymbol{y}) = J(\boldsymbol{\theta}) + \lambda \Omega(\boldsymbol{\theta}),
\label{eq:regobjfun}
\end{equation}

where $\lambda \in \mathbb{R}_{\geq 0}$ is the regularization parameter that controls how much the penalty term $\Omega(\boldsymbol{\theta})$ contributes to the total objective function, relative to the original objective function $J(\boldsymbol{\theta})$ without regularization. In neural networks, the penalty term typically only affects the weights of the affine transformations in each layer, while the bias vectors are left unregularized during the training. This is because weights determine the interaction between variables and generally require more data to estimate accurately, whereas each bias affects only a single variable and is therefore less prone to overfitting. So, excluding biases from regularization usually introduces little additional variance, while regularizing them can introduce a significant amount of underfitting \citep{goodfellow2016deep}. When $ \lambda = 0 $, the objective function in  \eqref{eq:regobjfun} becomes the unregularized objective function. 

We omit the offset terms in the model for simplicity, and use \(\boldsymbol{W}\) to denote the collection of trainable weight matrices. Under this convention, the regularized objective can be written as
$$
\widetilde{J}(\boldsymbol{W}; \boldsymbol{X}, \boldsymbol{y})=
J(\boldsymbol{W})+\lambda \Omega(\boldsymbol{W}).
$$
The gradient of the regularized objective with respect to the weights is then given by
\begin{equation}
\nabla_{\boldsymbol{W}} \widetilde{J}(\boldsymbol{W}; \boldsymbol{X}, \boldsymbol{y}) = \nabla_{\boldsymbol{W}} J(\boldsymbol{W}) + \lambda \nabla_{\boldsymbol{W}} \Omega(\boldsymbol{W}).
\label{eq:reggradient}
\end{equation}

At each iteration, the weights are updated using a single gradient descent step as follows
\begin{equation}
\boldsymbol{W} \leftarrow \boldsymbol{W} - \eta \nabla_{\boldsymbol{W}} \widetilde{J}(\boldsymbol{W}; \boldsymbol{X}, \boldsymbol{y}),
\label{eq:regupdate}
\end{equation}

where $\eta>0$ is the learning rate, which controls the size of each update and is also referred to as the step size. The parameters of neural networks are typically estimated through stochastic gradient descent (SGD), a widely used optimization method in which the gradient is evaluated iteratively during training. In practice, the gradient is often computed from mini-batches, or subsamples of the training data, rather than from the full dataset. This mini-batch formulation introduces stochasticity into the optimization process, which substantially improves computational efficiency and may also support convergence in large-scale learning problems; see, for example, \cite{bottou1991stochastic, bottou2012stochastic}.

In regularized neural networks, the optimization problem also depends on the tuning parameter $\lambda$, which controls the strength of regularization and therefore directly affects predictive performance. Several approaches for selecting $\lambda$ have been discussed in the literature, with cross-validation being one of the most common data-driven choices because it targets out-of-sample prediction accuracy. Among these approaches, common variants include $K$-fold and leave-one-out cross-validation. Beyond its role as a tuning parameter selected for predictive performance, $\lambda$ also acts as a shrinkage parameter that determines how strongly the weight matrices $\boldsymbol{W}$ are regularized through the $\ell_1$ or $\ell_2$ norm. This perspective naturally leads to the norm-based regularization methods considered in the following sections.

\section{Norm-Based Regularization  in Neural Networks} \label{sec:normBasedRegularization}
This section presents a regularization framework for neural networks. We begin by reviewing standard penalties used to control complexity and mitigate overfitting, and we then introduce geometry-aware alternatives motivated by the covariance structure of learned representations.

\subsection{Standard  methods}
Penalization based on $\ell_2$ and $\ell_1$ norms is widely used in neural networks to control model complexity and reduce overfitting. A combination of these norms, commonly referred to as Elastic Net regularization, has also been used in the neural-network literature; see, for example, \citet{chen2018ead, zhang2019novel}. The $\ell_2$ penalty is often referred to as weight decay because it adds a term proportional to the current weights to the gradient, so shrinking the weights toward zero at each update. In the classical regression literature, it is known as \emph{ridge regression} (or \emph{Tikhonov regularization}), introduced by \citet{hoerl1970ridge}. The corresponding penalty is
$$
\Omega_{\text{ridge}}(\mathbf{W}) = \|\mathbf{W}\|_F^2 =  \sum_{i,j} w_{ij}^2.
$$
This penalty yields smooth shrinkage of the weights towards zero as $\lambda$ increases, without inducing exact zeros. It can also be interpreted as the negative log-density of a Gaussian prior on $\mathbf W$, and it is well known to stabilise estimation in the presence of collinearity. Adding this penalty to $J(\mathbf W)$ modifies a gradient step as
$$
\mathbf{W} \leftarrow \mathbf{W} - \eta\bigl(\nabla_{\mathbf{W}} J(\mathbf{W}) + 2\lambda \mathbf{W}\bigr)
= (1-2\eta\lambda)\mathbf W - \eta \nabla_{\mathbf{W}}J(\mathbf W).
$$
While ridge-type shrinkage improves numerical stability and generalization, it does not, by itself, promote sparsity, since coefficients are continuously shrunk rather than thresholded.

Penalizing the $\ell_1$ norm, on the other hand, provides a direct and effective mechanism for inducing sparse weight matrices, which is particularly valuable in large networks where interpretability and computational efficiency are central considerations. The resulting Lasso  penalty (\citet{tibshirani1996regression}) becomes, 
$$
\Omega_{\text{lasso}}(\mathbf{W}) =  \left\|\mathbf{W}\right\|_1 = \sum_{i,j} |w_{ij}|.
$$
Unlike the $\ell_2$ penalty, the $\ell_1$ penalty is not differentiable at zero, and its geometry (a sharp corner at the origin) leads to thresholding behaviour that can produce exact zeros; see \citet{hastie2009elements}.  The subgradient can be written as,
\[
\partial_{\mathbf W}\Omega_{\text{lasso}}(\mathbf W)
= \mathrm{sign}(\mathbf W),
\]
where $\mathrm{sign}(\cdot)$ is applied elementwise and, at zero, the subgradient takes values in the interval $[-\lambda,\lambda]$ entrywise. A generic subgradient step can be written as
\[
\mathbf{W} \leftarrow \mathbf{W} - \eta\bigl(\nabla_{\mathbf{W}} J(\mathbf{W}) + \lambda \bm G\bigr),
\qquad \bm G\in \partial_{\mathbf W}\Omega_{\text{lasso}}(\mathbf W).
\]

However, many applications require both sparsity and numerical stability. The $\ell_1$ regularization can become unstable in the presence of strong collinearity, whereas  $\ell_2$ regularization yields stable shrinkage but does not perform variable selection. Elastic Net combines these two mechanisms by augmenting the $\ell_1$ penalty with an additional $\ell_2$ term that stabilizes the fit \citep{zou2005regularization}. In its standard form,
$$
\Omega_{\text{en}}(\mathbf{W}) =\alpha \|\boldsymbol{W}\|_1
+
\frac{1-\alpha}{2}\|\boldsymbol{W}\|_F^2, = \alpha \sum_{i,j}|w_{ij}|  + \frac{1-\alpha}{2}\sum_{i,j}w_{ij}^2,
\qquad \alpha \in [0,1].
$$
The $\ell_1$ component encourages sparsity, while the $\ell_2$ component provides additional shrinkage and improves stability when learned features are strongly correlated. A gradient/subgradient step for Elastic Net is therefore
$$
\mathbf{W} \leftarrow \mathbf{W} - \eta\Bigl(\nabla_{\mathbf{W}} J(\mathbf{W}) + \lambda \alpha \operatorname{sign}(\boldsymbol{W})
+\lambda (1-\alpha)\boldsymbol{W}\Bigr),
$$
with the understanding that $\mathrm{sign}(\mathbf W)$ is interpreted as a subgradient at entries equal to zero. This viewpoint will be useful below, since our proposed penalties retain the combination of a quadratic stabilization term and a sparsity-inducing $\ell_1$ term, while replacing the isotropic $\ell_2$ component by a covariance-weighted quadratic penalty that adapts the shrinkage geometry to the representation.


\subsection{Geometry-aware regularization weights}

Building on this perspective, we now introduce geometry-aware penalties that retain the stabilizing role of a quadratic term and the sparsity-inducing role of an $\ell_1$ term, while allowing the strength of shrinkage to adapt to the second-order structure of the learned representations. Neural networks are widely used in applications such as image recognition, natural language processing, audio processing, and sequential data modelling, where deep architectures with many layers and a large number of parameters are often required. This flexibility makes training challenging and increases the risk of overfitting \citep{goodfellow2016deep}. Classical regularization methods shrink weights during training, yet in their standard forms, they do not explicitly account for the second-order structure of the representations used by the model. In particular, weight decay applies uniform shrinkage across all parameter directions, whereas $\ell_1$ regularization promotes sparsity through coordinatewise thresholding, without adapting the strength of shrinkage to correlations in the data. Motivated by these limitations, we propose two geometry-aware regularizers that incorporate a data-driven covariance structure to induce adaptive, data-informed shrinkage. The primary objective is to employ regularization strategies that better reflect the complexity of the model and more effectively control the structure induced by dependence patterns in the training representations.

Let $\mathbf{H}\in\mathbb{R}^{n\times p}$ denote a user-chosen representation matrix (e.g., hidden-layer activations, transformed features, or a structured design), and define the empirical Gram matrix

$$
\mathbf{C}_n=\frac{1}{n}\mathbf{H}^\top\mathbf{H},
\qquad 
\mathbf{C}_{\delta,n}=\mathbf{C}_n+\delta\mathbf{I}_p,\ \delta>0.
$$
The choice of $\mathbf H$ is flexible and may encode prior structure, feature maps, or dimension-reduction transforms.\footnote{For example, $\mathbf H$ may represent a subset of features, principal components, kernel embeddings, or other pre-specified design structures.}

\subsubsection{Covridge}
The $\ell_2$ regularization is widely used in deep learning to control model complexity and mitigate overfitting \citep{krogh1991simple, ong2016dynamically, van2017l2, nakamura2019adaptive, lu2020low}. We propose a modified strategy, \emph{Covridge}\footnote{The method incorporates a data-driven Gram matrix into a ridge-type penalty, thereby weighting shrinkage according to the second-order structure of the representation.}, which combines covariance-weighted quadratic shrinkage with standard $\ell_2$ weight decay. Given a loss function $J(\boldsymbol{W})$, we define the regularized objective function
$$
J(\boldsymbol{W}) + \Omega_{\text{covridge}}(\mathbf{W}),
$$
where the Covridge penalty is
\begin{equation}
\Omega_{\text{covridge}}(\mathbf{W})
= \lambda_1 \left\| (\mathbf{C}_{\delta,n})^{1/2}\mathbf{W} \right\|_F^2
+ \lambda_2 \|\mathbf{W}\|_F^2,
\label{eq:covridge}
\end{equation}

with $\lambda_1\ge 0$ controls the strength of the covariance-weighted quadratic term and $\lambda_2\ge 0$ controls the isotropic $\ell_2$ component. The matrix $\mathbf{C}_{\delta,n}$ weights shrinkage directions according to dependence patterns in the representation matrix $\mathbf H$, whereas the term $\lambda_2\|\mathbf W\|_F^2$ applies uniform penalization across all weights. When $\lambda_1=0$, \eqref{eq:covridge} reduces to standard $\ell_2$ weight decay. In the eigenbasis of $\mathbf C_n$, the penalty assigns larger quadratic weights to directions associated with larger eigenvalues, thereby inducing anisotropic shrinkage aligned with the spectral geometry of the representation.

\subsubsection{Sparridge}

In large neural networks, sparsity is often desirable both for interpretability and for computational efficiency, since zero-valued weights can be removed from storage and computation \citep{mishra2021accelerating, kepner2018sparse, ma2019transformed, kepner2019sparse, zhu2021bearing}. While Covridge induces geometry-aware shrinkage, it does not, in general, produce exact zeros. To couple covariance-weighted shrinkage with sparsity, we propose \emph{Sparridge}, defined by the regularized objective
$$
J(\boldsymbol{W}) + \Omega_{\text{sparridge}}(\mathbf{W}),
$$
where
\begin{equation}
\Omega_{\text{sparridge}}(\mathbf{W})
= \lambda_1 \left\| (\mathbf{C}_{\delta,n})^{1/2}\mathbf{W} \right\|_F^2
+ \gamma \|\mathbf{W}\|_1,
\label{eq:sparridge}
\end{equation}
with $\gamma\ge 0$ controls the strength of the sparsity-inducing $\ell_1$ term. The $\ell_1$ component promotes sparse solutions through coordinatewise thresholding, whereas the covariance-weighted quadratic term controls the overall scale and directional variation of the weights in a manner aligned with the dependence structure encoded by $\mathbf{C}_{\delta,n}$. When $\lambda_1=0$, \eqref{eq:sparridge} reduces to standard $\ell_1$ regularization. Moreover, the quadratic form $\mathrm{tr}(\mathbf W^\top \mathbf C_{\delta,n}\mathbf W)=\| \mathbf C_{\delta,n}^{1/2}\mathbf W\|_F^2$ can be interpreted as a Mahalanobis-type penalty induced by the representation geometry. In practice, both Covridge and Sparridge can be applied to any dense layer by constructing $\mathbf C_{\delta,n}^{1/2}$ from hidden representations of the training data, for example, from $\mathbf H_{\text{train}}$ obtained via a forward pass at the corresponding layer.

Although Sparridge and Elastic Net both combine an $\ell_1$ term with a quadratic stabilization component, their quadratic parts differ in a fundamental way. In Elastic Net, the term $\|\mathbf W\|_F^2$ is isotropic and therefore applies the same amount of shrinkage in every parameter direction, irrespective of any dependence structure in the representation. Sparridge replaces this isotropic quadratic penalty by the covariance-weighted form $\lambda_1\,\mathrm{tr}\!\bigl(\mathbf W^\top \mathbf C_{\delta,n}\mathbf W\bigr)$, which induces direction-dependent shrinkage. In particular, in the eigenbasis of $\mathbf C_{\delta,n}$, directions associated with larger eigenvalues receive stronger quadratic penalization. Consequently, Sparridge adapts regularization to the second-order structure of the representation and provides a geometry-aware alternative to isotropic $\ell_2$ shrinkage.


\section{Geometric Interpretation of the Proposed Penalties} 

The asymptotic sampling distributions derived for the proposed estimators describe their large-sample fluctuations, but they do not reveal by themselves how the underlying penalties reshape the geometry of the parameter space. We therefore complement the limit theory with a geometric interpretation that highlights the role of the covariance-weighted quadratic term. This perspective also clarifies how Covridge and Sparridge differ from classical penalties such as Ridge, Lasso, and Elastic Net. Ridge imposes isotropic shrinkage that is agnostic to the geometry of the data, Lasso promotes sparsity through coordinatewise thresholding without directional adaptation, and Elastic Net combines these two effects while retaining an isotropic $\ell_2$ component. By contrast, Covridge and Sparridge explicitly incorporate the spectral structure of a data-driven Gram matrix, yielding direction-sensitive shrinkage aligned with dominant modes of variability in the representation. This eigenvalue-weighted viewpoint motivates the contour illustrations presented below and provides intuition for why the proposed penalties can be advantageous in multicollinear or highly anisotropic designs.

For concreteness, consider a linear map parameterised by a weight matrix $\mathbf W\in\mathbb R^{p\times d}$ and an empirical Gram matrix $\mathbf C_n\in\mathbb R^{p\times p}$, with the stabilized version $\mathbf C_{\delta,n}=\mathbf C_n+\delta\mathbf I_p$ for some $\delta>0$. The Covridge penalty can be written as
$$
\Omega_{\mathrm{covridge}}(\mathbf W)
= \lambda_1\|\mathbf C_{\delta,n}^{1/2}\mathbf W\|_F^2+\lambda_2\|\mathbf W\|_F^2
= \mathrm{tr}\!\left(\mathbf W^\top(\lambda_1\mathbf C_{\delta,n}+\lambda_2\mathbf I_p)\mathbf W\right).
$$
Let $\mathbf C_{\delta,n}=\mathbf U_n\boldsymbol\Lambda_{\delta,n}\mathbf U_n^\top$ be the spectral decomposition, where
$\boldsymbol\Lambda_{\delta,n}=\mathrm{diag}(\mu_{1n}+\delta,\dots,\mu_{pn}+\delta)$, and define eigen-coordinates $\widetilde{\mathbf W}=\mathbf U_n^\top \mathbf W$. Since $\mathbf U_n$ is orthogonal, the Frobenius norm is invariant under this rotation, and therefore
$$
\bigl\|\mathbf{C}_{\delta,n}^{1/2}\mathbf W\bigr\|_F^2
=\bigl\|\boldsymbol\Lambda_{\delta,n}^{1/2}\widetilde{\mathbf W}\bigr\|_F^2
=\sum_{i=1}^p(\mu_{in}+\delta)\,\|\widetilde{\mathbf w}_i\|_2^2,
\qquad
\|\mathbf W\|_F^2=\|\widetilde{\mathbf W}\|_F^2=\sum_{i=1}^p\|\widetilde{\mathbf w}_i\|_2^2,
$$
where $\widetilde{\mathbf w}_i^\top$ denotes the $i$th row of $\widetilde{\mathbf W}$. Substituting into the penalty yields
$$
\Omega_{\mathrm{covridge}}(\mathbf W)
= \lambda_1\|\boldsymbol\Lambda_{\delta,n}^{1/2}\widetilde{\mathbf W}\|_F^2+\lambda_2\|\widetilde{\mathbf W}\|_F^2
= \sum_{i=1}^p\bigl(\lambda_1(\mu_{in}+\delta)+\lambda_2\bigr)\,\|\widetilde{\mathbf w}_i\|_2^2.
$$

Through this decomposition, it can be seen that the quadratic component of Covridge induces direction-dependent shrinkage, so directions corresponding to larger eigenvalues of $\mathbf C_n$ are penalized more heavily. In other words, the eigenvalue weighting acts as a form of spectral filtering that discourages excessive parameter growth along dominant modes of variability in the representation, while imposing comparatively weaker shrinkage along lower-variance directions. In contrast to Ridge, which shrinks all directions uniformly, Covridge adapts the strength of quadratic penalization to the data geometry through the spectrum of $\mathbf C_n$. Moreover, unlike Lasso, it achieves directional control through continuous shrinkage rather than coordinatewise thresholding.


Similarly, the Sparridge builds on the same covariance-weighted quadratic term with an $\ell_1$ penalty, thereby combining direction-sensitive shrinkage with sparsity in the original parameterization. Relative to Elastic Net, Sparridge retains the sparsity-inducing mechanism of the $\ell_1$ penalty but replaces isotropic $\ell_2$ shrinkage by eigenvalue-weighted quadratic shrinkage, thereby aligning regularization with the empirical geometry of the representation.

\begin{figure}[htb]
\centering
\includegraphics[width=1.02\textwidth]{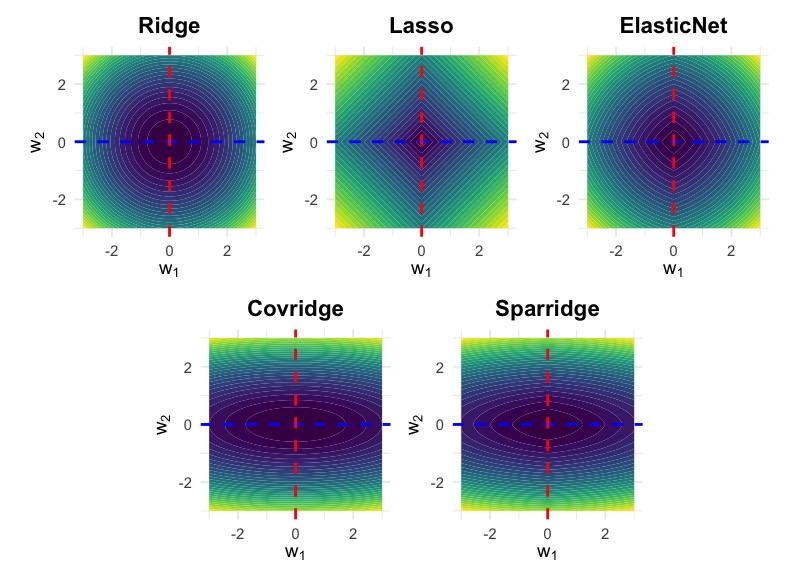}
\caption{Penalty landscapes in a two-dimensional eigenspace. The horizontal axis $w_1$ corresponds to a low-variance eigen-direction of the Gram matrix $\mathbf C_n$, while the vertical axis $w_2$ corresponds to a high-variance eigen-direction. Ridge is isotropic, whereas Covridge and Sparridge exhibit eigenvalue-weighted anisotropic shrinkage, producing contours that are compressed along the high-variance direction. The $\ell_1$ component in Lasso, Elastic Net, and Sparridge introduces corners that promote sparsity.}
\label{fig:contours-penalties}
\end{figure}

To visualise these geometric effects and connect them to familiar penalty shapes, Figure~\ref{fig:contours-penalties} presents contour plots of several regularizers in a two-dimensional eigenspace. Here $(w_1,w_2)$ are eigen-coordinates of $\mathbf C_n$, with $w_1$ aligned with a low-variance eigen-direction and $w_2$ aligned with a high-variance eigen-direction. Ridge produces isotropic circular contours, Lasso yields diamond-shaped contours with corners on the coordinate axes, and Elastic Net interpolates between these geometries. By contrast, Covridge generates anisotropic ellipses that are compressed along the high-variance direction, while Sparridge combines this anisotropy with the corner structure induced by the $\ell_1$ component.

\section{Properties of Covridge and Sparridge Estimators}
\label{sec:theoretical_results}
We study the large-sample behaviour of the proposed Covridge and Sparridge estimators. Both procedures are penalised least-squares estimators that augment the standard quadratic loss with a penalty involving the empirical covariance structure of a secondary design matrix. For each $n$, consider the fixed-design linear model
 $$y_{i,n} = \bm{x}_{i,n}^\top \bm{w}_0 + \varepsilon_{i,n},  i = 1,\dots,n,$$ 
 
 where $\bm x_{i,n}\in\mathbb R^p$ is the $i$th row of the (nonrandom) design matrix $\mathbf X_n\in\mathbb R^{n\times p}$, $\bm w_0\in\mathbb R^p$ is the target parameter vector, and $\{\varepsilon_{i,n}\}_{i=1}^n$ are i.i.d.\ errors with $\mathbb E[\varepsilon_{i,n}]=0$. Let $\bm y_n=(y_{1,n},\dots,y_{n,n})^\top$ and $\bm\varepsilon_n=(\varepsilon_{1,n},\dots,\varepsilon_{n,n})^\top$. We impose the following conditions to obtain the asymptotic distributions.

\begin{assump}[label=(A\arabic*)]
\item \label{ass:A1}
The (nonrandom) design matrices $\mathbf X_n,\mathbf H_n\in\mathbb R^{n\times p}$ satisfy
$$
\mathbf Q_n:=\frac1n\mathbf X_n^\top\mathbf X_n \longrightarrow \mathbf Q \succ 0,
\qquad
\mathbf C_n:=\frac1n\mathbf H_n^\top\mathbf H_n \longrightarrow \mathbf C \succeq 0,
$$
as $n\to\infty$. Moreover, no single observation dominates the regression design,
$$
\max_{1\le i\le n}\frac{\|\bm x_{i,n}\|_2^2}{n}\longrightarrow 0,
$$
where $\bm x_{i,n}^\top$ denotes the $i$th row of $\mathbf X_n$. For a fixed $\delta>0$, define the stabilised Gram matrix
$$
\mathbf C_{\delta,n}:=\mathbf C_n+\delta\mathbf I_p \longrightarrow \mathbf C_\delta:=\mathbf C+\delta\mathbf I_p \succ 0.
$$

\item \label{ass:A2}
(\textit{Errors and moments})
The errors $\{\varepsilon_{i,n}\}_{i=1}^n$ are i.i.d.\ with
$$
\mathbb E[\varepsilon_{i,n}]=0,
\qquad
\mathrm{Var}(\varepsilon_{i,n})=\sigma^2\in(0,\infty),
\qquad
\mathbb E\bigl[|\varepsilon_{i,n}|^{2+\eta}\bigr]<\infty
\ \text{for some }\eta>0.
$$

\item \label{ass:A3}
(\textit{Covridge tuning limits and invertibility})
For Covridge, the tuning sequences satisfy $\lambda_{1n}\to\lambda_1\ge 0$ and $\lambda_{2n}\to\lambda_2\ge 0$, and the limit matrix
$$
\mathbf H^\ast_{\mathrm{cov}}:=\mathbf Q+\lambda_1\mathbf C_\delta+\lambda_2\mathbf I_p
$$
is positive definite. Consequently, for all sufficiently large $n$,
$$
\mathbf H^\ast_{n,\mathrm{cov}}:=\mathbf Q_n+\lambda_{1n}\mathbf C_{\delta,n}+\lambda_{2n}\mathbf I_p
$$
is nonsingular and $(\mathbf H^\ast_{n,\mathrm{cov}})^{-1}\to(\mathbf H^\ast_{\mathrm{cov}})^{-1}$.

\item \label{ass:A4}
(\textit{Sparridge local $\ell_1$ scaling})
For Sparridge, let $\gamma_n\ge 0$ denote the $\ell_1$ tuning parameter and assume
$$
\sqrt n\,\gamma_n \longrightarrow \gamma\in[0,\infty).
$$
This local scaling yields a first-order contribution of the $\ell_1$ term when $\gamma>0$, and an asymptotically negligible $\ell_1$ term when $\gamma=0$; see \citet{knight2000asymptotics}.
\end{assump}

Assumption~\ref{ass:A1} imposes deterministic limits for the Gram matrices and excludes a dominant observation, which together with Assumption~\ref{ass:A2} ensures a multivariate central limit theorem for $n^{-1/2}\mathbf X_n^\top\bm\varepsilon_n$. Assumption~\ref{ass:A3} guarantees that the Covridge normal equations remain well-conditioned and that the inverse matrices converge. Assumption~\ref{ass:A4} specifies the local regime under which the $\ell_1$ term affects the first-order asymptotics of Sparridge \citep{knight2000asymptotics}. Under Assumptions~\ref{ass:A1}--\ref{ass:A3}, the Covridge estimator admits a closed-form representation, and its asymptotic distribution follows from an explicit decomposition, a multivariate central limit theorem, and Slutsky's theorem.

\begin{theorem}
\label{thm:covridge-asymptotic}
Consider the fixed-design linear model \( \bm y_n=\mathbf X_n\bm w_0+\bm\varepsilon_n \).
Define the Covridge estimator
$$
\widehat{\bm w}_n
=\arg\min_{\bm w\in\mathbb{R}^p}
\left\{
\frac{1}{2n}\|\bm y_n-\mathbf X_n\bm w\|_2^2
+\frac{\lambda_{1n}}{2}\big\|(\mathbf C_{\delta,n})^{1/2}\bm w\big\|_2^2
+\frac{\lambda_{2n}}{2}\|\bm w\|_2^2
\right\},
$$
and set \(\mathbf Q_n=n^{-1}\mathbf X_n^\top\mathbf X_n\), \(\bm q_n=n^{-1}\mathbf X_n^\top\bm y_n\), and
$$
\mathbf H^\ast_{n,\mathrm{cov}}:=\mathbf Q_n+\lambda_{1n}\mathbf C_{\delta,n}+\lambda_{2n}\mathbf I_p,
\qquad
\bm w_n^\circ:=(\mathbf H^\ast_{n,\mathrm{cov}})^{-1}\mathbf Q_n\bm w_0.
$$
Under Assumptions~\ref{ass:A1}--\ref{ass:A3}, \(\widehat{\bm w}_n=(\mathbf H^\ast_{n,\mathrm{cov}})^{-1}\bm q_n\) and
$$
\sqrt{n}\bigl(\widehat{\bm w}_n-\bm w_n^\circ\bigr)
\xrightarrow{d}
\mathcal{N}\!\Bigl(\bm 0,\;
\sigma^2(\mathbf H^\ast_{\mathrm{cov}})^{-1}\mathbf Q(\mathbf H^\ast_{\mathrm{cov}})^{-1}\Bigr),
$$
where \(\mathbf H^\ast_{\mathrm{cov}}=\mathbf Q+\lambda_1\mathbf C_\delta+\lambda_2\,\mathbf I_p\) is the limit in Assumption~\ref{ass:A3}.
\end{theorem}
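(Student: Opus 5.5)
The plan has three steps: write down the closed form, decompose the centred estimator into a fixed matrix times a normalised noise sum, and then invoke a central limit theorem followed by Slutsky's theorem.

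\textbf{Step 1: closed form.} The objective is a strictly convex quadratic in $\bm w$ for all sufficiently large $n$. By Assumption~\ref{ass:A3}, its Hessian $\mathbf H^\ast_{n,\mathrm{cov}}=\mathbf Q_n+\lambda_{1n}\mathbf C_{\delta,n}+\lambda_{2n}\mathbf I_p$ is nonsingular for such $n$; it is also positive semidefinite as a sum of positive semidefinite terms, hence positive definite. Setting the gradient to zero gives
$$-\bm q_n+\mathbf Q_n\bm w+\lambda_{1n}\mathbf C_{\delta,n}\bm w+\lambda_{2n}\bm w=\bm 0.$$
Here we use $\|\mathbf C_{\delta,n}^{1/2}\bm w\|_2^2=\bm w^\top\mathbf C_{\delta,n}\bm w$, which holds because $\mathbf C_{\delta,n}\succ 0$ is symmetric. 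Solving yields $\widehat{\bm w}_n=(\mathbf H^\ast_{n,\mathrm{cov}})^{-1}\bm q_n$.

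\textbf{Step 2: decomposition.} Substitute $\bm y_n=\mathbf X_n\bm w_0+\bm\varepsilon_n$ to get $\bm q_n=\mathbf Q_n\bm w_0+n^{-1}\mathbf X_n^\top\bm\varepsilon_n$. This gives the exact identity
$$\sqrt n(\widehat{\bm w}_n-\bm w_n^\circ)=(\mathbf H^\ast_{n,\mathrm{cov}})^{-1}\,\bm s_n,\qquad \bm s_n:=n^{-1/2}\mathbf X_n^\top\bm\varepsilon_n.$$
The bias term $\bm w_n^\circ$ absorbs the deterministic part exactly, so no remainder needs to be controlled.

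\textbf{Step 3: CLT for $\bm s_n$.} This is the main step; I would use the Cramér--Wold device. Fix $\bm a\in\mathbb R^p$ and write
$$\bm a^\top\bm s_n=\sum_{i=1}^n c_{i,n}\varepsilon_{i,n},\qquad c_{i,n}=n^{-1/2}\bm a^\top\bm x_{i,n}.$$
The summands are independent with mean zero. The variance is $\sigma^2\bm a^\top\mathbf Q_n\bm a\to\sigma^2\bm a^\top\mathbf Q\bm a$, which is positive for $\bm a\neq\bm 0$ since $\mathbf Q\succ0$.

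To check the Lyapunov condition with exponent $2+\eta$, bound
$$\sum_i|c_{i,n}|^{2+\eta}\,\mathbb E|\varepsilon_{i,n}|^{2+\eta}\le M\,\max_i|c_{i,n}|^{\eta}\sum_i c_{i,n}^2.$$
Here $M$ is the common $(2+\eta)$-th moment, which does not depend on $i$ because the errors are identically distributed; I read Assumption~\ref{ass:A2} as also bounding $M$ uniformly in $n$.

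The two factors behave as follows:
\begin{itemize}
\item $\sum_i c_{i,n}^2=\bm a^\top\mathbf Q_n\bm a$, which is bounded.
\item $\max_i c_{i,n}^2\le\|\bm a\|_2^2\max_i\|\bm x_{i,n}\|_2^2/n\to0$, by the no-dominant-observation part of Assumption~\ref{ass:A1}.
\end{itemize}
Hence the Lyapunov ratio tends to zero. Therefore $\bm a^\top\bm s_n\xrightarrow{d}\mathcal N(0,\sigma^2\bm a^\top\mathbf Q\bm a)$, and so $\bm s_n\xrightarrow{d}\mathcal N(\bm 0,\sigma^2\mathbf Q)$.

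One could instead use Lindeberg directly, which only needs second moments plus uniform integrability. The given $2+\eta$ moment makes Lyapunov the simplest route.

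\textbf{Step 4: conclusion.} By Assumption~\ref{ass:A3}, $(\mathbf H^\ast_{n,\mathrm{cov}})^{-1}\to(\mathbf H^\ast_{\mathrm{cov}})^{-1}$ deterministically. Slutsky's theorem, or the continuous mapping theorem applied to a deterministic convergent matrix times a weakly convergent vector, then gives the limit
$$\mathcal N\bigl(\bm 0,\sigma^2(\mathbf H^\ast_{\mathrm{cov}})^{-1}\mathbf Q(\mathbf H^\ast_{\mathrm{cov}})^{-1}\bigr),$$
using the symmetry of $\mathbf H^\ast_{\mathrm{cov}}$.

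The only delicate point is the triangular-array CLT in Step 3, in particular the uniformity of the error moment bound across $n$. Everything else is exact algebra.
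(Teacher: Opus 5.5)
Your proposal is correct and follows the paper's proof essentially step for step. Both use the exact identity $\sqrt n(\widehat{\bm w}_n-\bm w_n^\circ)=(\mathbf H^\ast_{n,\mathrm{cov}})^{-1}n^{-1/2}\mathbf X_n^\top\bm\varepsilon_n$, then a Lyapunov CLT for the triangular array based on the no-dominant-row condition and the $(2+\eta)$ moment, then Slutsky's theorem; your additions (the first-order-condition derivation of the closed form, the Cram\'{e}r--Wold reduction, and the explicit requirement that the $(2+\eta)$ moment be bounded uniformly in $n$, which the paper leaves implicit) are sound refinements rather than a different route.
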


\begin{proof}
Since the designs are fixed, $\mathbf Q_n$, $\mathbf C_{\delta,n}$ and $\mathbf H^\ast_{n,\mathrm{cov}}$ are nonrandom.
From $\bm y_n=\mathbf X_n\bm w_0+\bm\varepsilon_n$ we obtain
$$
\bm q_n=\frac{1}{n}\mathbf X_n^\top\bm y_n
=\mathbf Q_n\bm w_0+\frac{1}{n}\mathbf X_n^\top\bm\varepsilon_n.
$$
Hence
$$
\widehat{\bm w}_n
=\bm w_n^\circ+(\mathbf H^\ast_{n,\mathrm{cov}})^{-1}\frac{1}{n}\mathbf X_n^\top\bm\varepsilon_n,
$$
and therefore
$$
\sqrt{n}\bigl(\widehat{\bm w}_n-\bm w_n^\circ\bigr)
=(\mathbf H^\ast_{n,\mathrm{cov}})^{-1}\frac{1}{\sqrt{n}}\mathbf X_n^\top\bm\varepsilon_n.
$$
By Assumptions~\ref{ass:A1}--\ref{ass:A2}, the no-dominant-row condition together with the $(2+\eta)$ moment assumption implies a multivariate Lyapunov (hence Lindeberg--Feller) condition for the triangular array
$\{\bm x_{i,n}\varepsilon_{i,n}:1\le i\le n\}$, and thus
$$
\frac{1}{\sqrt{n}}\mathbf X_n^\top\bm\varepsilon_n
\xrightarrow{d}\mathcal{N}(\bm 0,\sigma^2\mathbf Q).
$$
Moreover, Assumptions~\ref{ass:A1} and \ref{ass:A3} show
$\mathbf H^\ast_{n,\mathrm{cov}}\to\mathbf H^\ast_{\mathrm{cov}}\succ 0$ and hence
$(\mathbf H^\ast_{n,\mathrm{cov}})^{-1}\to(\mathbf H^\ast_{\mathrm{cov}})^{-1}$.
An application of Slutsky's theorem yields the stated limit.
\end{proof}

\begin{remark}
Under Assumptions~\ref{ass:A1} and \ref{ass:A3}, we have $\mathbf Q_n\to\mathbf Q$ and $\mathbf H^\ast_{n,\mathrm{cov}}\to\mathbf H^\ast_{\mathrm{cov}}\succ0$. By continuity of the map $(\mathbf Q,\mathbf H^\ast)\mapsto (\mathbf H^\ast)^{-1}\mathbf Q\bm w_0$, it follows that
$$
\bm w_n^\circ\to \bm w_\star
:= (\mathbf H^\ast_{\mathrm{cov}})^{-1}\mathbf Q\bm w_0.
$$
If, in addition, $\sqrt{n}\,(\bm w_n^\circ-\bm w_\star)\to \bm 0$, then
$$
\sqrt{n}\bigl(\widehat{\bm w}_n-\bm w_\star\bigr)
\xrightarrow{d}
\mathcal{N}\!\Bigl(\bm 0,\;
\sigma^2(\mathbf H^\ast_{\mathrm{cov}})^{-1}\mathbf Q(\mathbf H^\ast_{\mathrm{cov}})^{-1}\Bigr).
$$
Moreover, $\bm w_\star$ is the \emph{penalized (shrunken) target} induced by the Covridge criterion, and in general $\bm w_\star\neq \bm w_0$ unless the penalty vanishes asymptotically (e.g., $\lambda_{1n}\to 0$ and $\lambda_{2n}\to 0$) or an explicit bias correction is applied. Accordingly, the Gaussian limit describes the fluctuations of $\widehat{\bm w}_n$ around $\bm w_\star$, rather than around the true parameter $\bm w_0$.
\end{remark}

\begin{theorem}
\label{thm:sparridge}
Consider the fixed-design linear model \( \bm y_n=\mathbf X_n\bm w_0+\bm\varepsilon_n \). Under Assumptions~\ref{ass:A1}--\ref{ass:A4}, define the Sparridge estimator
$$
\widehat{\bm{w}}_n 
= \arg\min_{\bm{w}\in\mathbb{R}^p}
\left\{
\frac{1}{2n}\|\bm{y}_n - \mathbf{X}_n\bm{w}\|_2^2
+ \frac{\lambda_{1n}}{2}\,\bm{w}^\top \mathbf{C}_{\delta,n}\bm{w}
+ \gamma_n\|\bm{w}\|_1
\right\}.
$$
Let \(\mathbf Q_n:=n^{-1}\mathbf X_n^\top\mathbf X_n\), \(\mathbf H_n:=\mathbf Q_n+\lambda_{1n}\mathbf C_{\delta,n}\), and \(\mathbf H:=\mathbf Q+\lambda_1\mathbf C_\delta\), and define
$$
\bm w_n^\circ := \mathbf H_n^{-1}\mathbf Q_n\bm w_0,
\qquad
\bm w_\star := \mathbf H^{-1}\mathbf Q\bm w_0,
\qquad
A_\star:=\{j:w_{\star j}\neq 0\}.
$$
Let \(\bm Z\sim\mathcal N(\bm 0,\sigma^2\mathbf Q)\), and define, for \(\bm u\in\mathbb R^p\),
$$
\Delta(\bm u)=\frac12\,\bm u^\top \mathbf H \bm u-\bm u^\top \bm Z+\gamma\Bigl(\sum_{j\in A_\star}\mathrm{sign}(w_{\star j})u_j+\sum_{j\notin A_\star}|u_j|\Bigr),
$$
where \(\sqrt n\,\gamma_n\to\gamma\in[0,\infty)\). Then
$$
\sqrt n\,(\widehat{\bm w}_n-\bm w_n^\circ)\xRightarrow{d}\bm U^\star:=\arg\min_{\bm u\in\mathbb R^p}\Delta(\bm u).
$$
In particular, if \(\gamma=0\), then \(\bm U^\star=\mathbf H^{-1}\bm Z\) and \(\bm U^\star\sim\mathcal N\!\bigl(\bm 0,\sigma^2\mathbf H^{-1}\mathbf Q\mathbf H^{-1}\bigr)\).
\end{theorem}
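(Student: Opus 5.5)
We would follow the convexity argument of \citet{knight2000asymptotics}, reparametrizing around the deterministic centring $\bm w_n^\circ$. Set $\bm u=\sqrt n(\bm w-\bm w_n^\circ)$ and consider the rescaled, recentred criterion
$$
V_n(\bm u)=n\Bigl[\widetilde J_n\bigl(\bm w_n^\circ+\bm u/\sqrt n\bigr)-\widetilde J_n(\bm w_n^\circ)\Bigr],
$$
where $\widetilde J_n$ is the Sparridge objective. Then $\sqrt n(\widehat{\bm w}_n-\bm w_n^\circ)=\arg\min_{\bm u}V_n(\bm u)$. Because $\mathbf H_n\to\mathbf H$, and $\mathbf H\succ0$ (guaranteed by $\mathbf Q\succ0$ and $\lambda_1\mathbf C_\delta\succeq0$), $V_n$ is strictly convex for large $n$, so the minimizer is unique. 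The plan has three steps: show $V_n\to\Delta$ in finite-dimensional distributions; note that $\Delta$ is strictly convex with a unique minimizer; and invoke the convexity lemma (Geyer; Hjort--Pollard), which upgrades pointwise convergence of convex processes to convergence of their argmins.

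For the smooth part we expand the quadratic loss plus the covariance term exactly. Using $\bm y_n=\mathbf X_n\bm w_0+\bm\varepsilon_n$ and $\mathbf H_n\bm w_n^\circ=\mathbf Q_n\bm w_0$, the linear term in $\bm u$ coming from $\mathbf Q_n(\bm w_n^\circ-\bm w_0)+\lambda_{1n}\mathbf C_{\delta,n}\bm w_n^\circ$ vanishes identically; this identity is precisely why $\bm w_n^\circ$ is chosen as the centring. What remains is
$$
\tfrac12\bm u^\top\mathbf H_n\bm u-\bm u^\top n^{-1/2}\mathbf X_n^\top\bm\varepsilon_n .
$$
By the Lyapunov CLT argument already used in the proof of Theorem~\ref{thm:covridge-asymptotic}, $n^{-1/2}\mathbf X_n^\top\bm\varepsilon_n\xrightarrow{d}\bm Z\sim\mathcal N(\bm 0,\sigma^2\mathbf Q)$. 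Together with $\mathbf H_n\to\mathbf H$, this term converges jointly over finitely many $\bm u$ to $\tfrac12\bm u^\top\mathbf H\bm u-\bm u^\top\bm Z$.

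The $\ell_1$ part contributes
$$
\sqrt n\gamma_n\sum_j\sqrt n\Bigl(\bigl|w^\circ_{nj}+u_j/\sqrt n\bigr|-\bigl|w^\circ_{nj}\bigr|\Bigr),
\qquad \sqrt n\gamma_n\to\gamma .
$$
For $j\in A_\star$, the Remark's continuity argument gives $w^\circ_{nj}\to w_{\star j}\ne0$, so eventually the sign is fixed and the bracket equals $\mathrm{sign}(w_{\star j})u_j$ exactly. For $j\notin A_\star$, the bracket tends to $|u_j|$ only if $\sqrt n\,w^\circ_{nj}\to0$. This is the main obstacle: the convergence $\bm w_n^\circ\to\bm w_\star$ alone does not give it. If instead $\sqrt n\,w^\circ_{nj}\to c_j$, the limit becomes $|u_j+c_j|-|c_j|$ rather than $|u_j|$. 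We therefore expect to need, and would state explicitly, the rate condition $\sqrt n(\bm w_n^\circ-\bm w_\star)\to\bm 0$ already appearing in the Remark (it holds, for example, when $\mathbf Q_n,\mathbf C_n,\lambda_{1n}$ converge at rate $o(n^{-1/2})$). Alternatively, one can note that when $\gamma=0$ the whole $\ell_1$ contribution is $O(|\sqrt n\gamma_n|\,\|\bm u\|_1)\to0$, so no such condition is needed in that case.

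Granting this, $V_n(\bm u)\to\Delta(\bm u)$ in finite-dimensional distributions. Since $\Delta$ is the sum of a strictly convex quadratic and a convex piecewise-linear term, it has a unique minimizer $\bm U^\star$. The convexity lemma then gives $\arg\min V_n\xrightarrow{d}\bm U^\star$, with the usual $O_p(1)$ tightness automatic from convexity. Finally, when $\gamma=0$ we have $\Delta(\bm u)=\tfrac12\bm u^\top\mathbf H\bm u-\bm u^\top\bm Z$, which is minimized at $\bm U^\star=\mathbf H^{-1}\bm Z\sim\mathcal N(\bm 0,\sigma^2\mathbf H^{-1}\mathbf Q\mathbf H^{-1})$.
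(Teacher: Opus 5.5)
Your argument follows the paper's proof step for step: the same local reparametrization around $\bm w_n^\circ$, the same cancellation of the linear terms via $\mathbf H_n\bm w_n^\circ=\mathbf Q_n\bm w_0$, the Lyapunov CLT for $n^{-1/2}\mathbf X_n^\top\bm\varepsilon_n$, and the convexity/argmin lemma. The obstacle you flag is real, and it is exactly the step the paper does not justify. The paper asserts $R_n(\bm u)\to R(\bm u)$ by citing the one-dimensional argument of Knight and Fu. That argument is centred at a \emph{fixed} parameter, where the $\ell_1$ increment at a zero coordinate is exactly $|u_j|$. With the moving centre $\bm w_n^\circ$, the increment at $j\notin A_\star$ is $|\sqrt n\,w^\circ_{nj}+u_j|-|\sqrt n\,w^\circ_{nj}|$, and (A1)--(A4) give no rate for $\bm w_n^\circ\to\bm w_\star$.

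This is not just a gap in the proof: for $\gamma>0$ the statement itself fails without a condition like yours. Here is a counterexample satisfying (A1)--(A4).
\begin{itemize}
\item Take $p=2$, a bounded-row design with $\mathbf Q_n=\mathbf I_2$, $\lambda_{1n}=1$, and $\bm w_0=(1,0)^\top$.
\item Let every row of $\mathbf H_n$ equal $n^{-1/8}(1,1)$, so $\mathbf C_n=n^{-1/4}\mathbf 1\mathbf 1^\top\to\mathbf C=\mathbf 0$.
\item Then $\mathbf H=(1+\delta)\mathbf I_2$, $\bm w_\star=((1+\delta)^{-1},0)^\top$ and $A_\star=\{1\}$.
\end{itemize}
Since $\mathbf H_n=(1+\delta)\mathbf I_2+n^{-1/4}\mathbf 1\mathbf 1^\top$, Sherman--Morrison gives
\[
w^\circ_{n2}=-\frac{n^{-1/4}}{(1+\delta)\bigl(1+\delta+2n^{-1/4}\bigr)},\qquad \sqrt n\,w^\circ_{n2}\to-\infty .
\]
Hence the second $\ell_1$ increment converges to $-\gamma u_2$ rather than $\gamma|u_2|$. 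The convexity lemma then yields a limit whose second coordinate is $(Z_2+\gamma)/(1+\delta)$, which is Gaussian. The claimed $U^\star_2=\mathrm{sign}(Z_2)(|Z_2|-\gamma)_+/(1+\delta)$ instead has an atom at $0$.

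So for $\gamma>0$ your extra hypothesis is genuinely necessary, not an artifact of the proof. It suffices to require $\sqrt n\,w^\circ_{nj}\to 0$ for $j\notin A_\star$ only, which your condition $\sqrt n(\bm w_n^\circ-\bm w_\star)\to\bm 0$ implies. Your remark that $\gamma=0$ needs no extra condition is also correct, because $|R_n(\bm u)|\le\sqrt n\,\gamma_n\|\bm u\|_1\to 0$. The Gaussian special case of the theorem therefore stands as stated.
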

\begin{proof}
We analyze the local behavior of the penalized objective around the finite-sample target \(\bm{w}_n^\circ\). Consider the criterion (up to an additive constant independent of \(\bm{w}\))
$$
L_n(\bm{w})
= \frac{1}{2n}\|\bm{y}_n-\mathbf{X}_n\bm{w}\|_2^2
+ \frac{\lambda_{1n}}{2}\,\bm{w}^\top\mathbf{C}_{\delta,n}\bm{w}
+ \gamma_n\|\bm{w}\|_1,
$$
where \(\bm{w}_n^\circ\) satisfies \(\mathbf{H}_n\bm{w}_n^\circ = \mathbf{Q}_n\bm{w}_0\). Introducing the local reparameterization \(\bm{w}=\bm{w}_n^\circ+\bm{u}/\sqrt{n}\) with \(\bm{u}\in\mathbb{R}^p\), define the rescaled contrast \(\Delta_n(\bm{u}) = n\{L_n(\bm{w}_n^\circ + \bm{u}/\sqrt{n}) - L_n(\bm{w}_n^\circ)\}\).

Using \(\bm{y}_n=\mathbf{X}_n\bm{w}_0+\bm{\varepsilon}_n\), a direct expansion of the quadratic loss yields
$$
\begin{aligned}
\Delta_n(\bm u)
&= -\frac{1}{\sqrt n}\bm u^\top \mathbf X_n^\top\!\bigl(\bm y_n-\mathbf X_n\bm w_n^\circ\bigr)
   +\frac12\,\bm u^\top \mathbf Q_n \bm u
   + R_n(\bm u) \\
&\quad + n\frac{\lambda_{1n}}{2}\Bigl\{
\bigl(\bm w_n^\circ+\tfrac{\bm u}{\sqrt n}\bigr)^\top \mathbf C_{\delta,n}
\bigl(\bm w_n^\circ+\tfrac{\bm u}{\sqrt n}\bigr)
-(\bm w_n^\circ)^\top \mathbf C_{\delta,n}\bm w_n^\circ
\Bigr\},
\end{aligned}
$$
where \(R_n(\bm{u}) := n\gamma_n\{\|\bm{w}_n^\circ + \bm{u}/\sqrt{n}\|_1 - \|\bm{w}_n^\circ\|_1\}\) denotes the \(\ell_1\) contribution. Define also
$$
R(\bm u):=\gamma\Bigl(\sum_{j\in A_\star}\mathrm{sign}(w_{\star j})u_j+\sum_{j\notin A_\star}|u_j|\Bigr).
$$

Moreover,
$$
\mathbf{X}_n^\top(\bm{y}_n-\mathbf{X}_n\bm{w}_n^\circ)
= \mathbf{X}_n^\top\bm{\varepsilon}_n + n\mathbf{Q}_n(\bm{w}_0-\bm{w}_n^\circ).
$$
Since \(\mathbf{Q}_n\bm{w}_0=\mathbf{H}_n\bm{w}_n^\circ\), it follows that \(\mathbf{Q}_n(\bm{w}_0-\bm{w}_n^\circ)=\lambda_{1n}\mathbf{C}_{\delta,n}\bm{w}_n^\circ\), and hence
$$
\mathbf{X}_n^\top(\bm{y}_n-\mathbf{X}_n\bm{w}_n^\circ)
= \mathbf{X}_n^\top\bm{\varepsilon}_n + n\lambda_{1n}\mathbf{C}_{\delta,n}\bm{w}_n^\circ.
$$
A corresponding expansion of the quadratic penalty shows that the linear terms in

$\sqrt{n}\lambda_{1n}\bm{u}^\top\mathbf{C}_{\delta,n}\bm{w}_n^\circ$ cancel exactly between the loss and the penalty, so that the smooth part of \(\Delta_n(\bm{u})\) reduces to
$$
-\bm{u}^\top\bm{Z}_n + \frac{1}{2}\bm{u}^\top\mathbf{H}_n\bm{u},
\qquad
\bm{Z}_n := \frac{1}{\sqrt{n}}\mathbf{X}_n^\top\bm{\varepsilon}_n,
\qquad
\mathbf{H}_n=\mathbf{Q}_n+\lambda_{1n}\mathbf{C}_{\delta,n}.
$$

Under Assumptions~\ref{ass:A1}--\ref{ass:A2}, \(\bm Z_n\xRightarrow{d}\bm Z\sim\mathcal N(\bm 0,\sigma^2\mathbf Q)\), and Assumption~\ref{ass:A1} together with $\lambda_{1n}\to\lambda_1$ implies $\mathbf H_n\to\mathbf H$ and hence $\bm w_n^\circ\to \bm w_\star$. The one-dimensional argument of \citet{knight2000asymptotics} then yields, for each fixed \(\bm u\),
$$
R_n(\bm u)\xrightarrow{p} R(\bm u),
$$
with convergence uniform on compact sets. Consequently, \(\Delta_n(\cdot)\to\Delta(\cdot)\) uniformly on compacts in probability (equivalently, epi-convergence in probability for convex functions), and since \(\mathbf H\succ0\) the limit criterion is strictly convex with unique minimizer \(\bm U^\star\). The convexity lemma and the argmin continuous mapping theorem \citep{geyer1994asymptotics,knight2000asymptotics} therefore imply
$$
\widehat{\bm{u}}_n:=\arg\min_{\bm u\in\mathbb R^p}\Delta_n(\bm u)
=\sqrt{n}(\widehat{\bm w}_n-\bm w_n^\circ)
$$
If \(\gamma=0\), the nonsmooth term vanishes and the minimizer is \(\bm U^\star=\mathbf H^{-1}\bm Z\), yielding the stated Gaussian law.
\end{proof}

\begin{remark}
The local scaling $\sqrt{n}\gamma_n\to\gamma$ determines whether the $\ell_1$ penalty contributes at the first-order asymptotic scale.
When $\gamma>0$, the non-smooth term remains present in the limiting criterion, leading to the generally non-Gaussian limit
\(\bm{U}^\star=\arg\min_{\bm u\in\mathbb R^p}\Delta(\bm u)\) for \(\sqrt{n}(\widehat{\bm{w}}_n-\bm{w}_n^\circ)\).
If $\gamma=0$, the $\ell_1$ contribution vanishes, $\bm{U}^\star=\mathbf{H}^{-1}\bm{Z}$, and the limit is Gaussian with covariance
\(\sigma^2\mathbf{H}^{-1}\mathbf{Q}\mathbf{H}^{-1}\). In this case, the Sparridge criterion reduces to a purely quadratic covariance-regularized least-squares problem, and the corresponding centered asymptotic distribution coincides with that of the covariance-based ridge criterion with $\lambda_{2n}=0$.
\end{remark}


\section{Simulation Study} \label{sec:simulation}

We conduct a simulation study to evaluate the performance of several regularized methods under different conditions, such as feature correlation, noise, and sparsity. The primary aim of the simulation study is to evaluate the performance of each method in producing reliable predictions in both low- and high-dimensional settings.

\subsection{Data Generating Process}
Let $n$ denote the number of observations, $p$ the total number of features, and $k$ the number of informative features. We generate data according to $y_i = f^\star(\mathbf{x}_i) + \varepsilon_i, \, 
\varepsilon_i \sim \mathcal{N}(0, \sigma^2),$
where $\mathbf{x}_i \in \mathbb{R}^{p}$ denotes the predictor vector for observation $i$, and $f^\star$ is the true regression function. The error terms are assumed to be independent and identically distributed and independent of the predictors. Among the $p$ predictors, only the first $k$ are informative. To introduce dependence among the informative predictors, we assume that they follow a multivariate normal distribution with covariance matrix \(\boldsymbol{\Sigma}\) whose elements $\sigma_{j\ell} =1,$ if $j=\ell$ and $\sigma_{j\ell}=\rho$ if $j\neq \ell$ for $j,\ell = 1,\dots,k,$ following \citep{yang2024new}, where  $\rho$ controls the correlation strength among the informative features. An equivalent way to generate this covariance structure is through the latent-factor representation \citep{qasim2022new}. The remaining predictors are generated independently and treated as noninformative noise variables. In the linear setting, the true regression function is defined as
$
f^\star(\mathbf{x}_i) = \mathbf{x}_i^\top \boldsymbol{\theta}^\star,
$
where $\boldsymbol{\theta}^\star \in \mathbb{R}^p$ is the true coefficient vector with
$
\theta_j^\star \stackrel{\text{i.i.d.}}{\sim} \mathcal{N}(0,\tau^2),
$ for $j = 1,\dots,k,$ while the remaining coefficients are set to zero, i.e., $\theta_j^* = 0$ for $j = k+1,\dots,p$.
To assess the robustness under model misspecification, we also consider a nonlinear variant,
$
f^\star(\mathbf{x}_i) = \sum^k_{j=1} \theta_j^\star \sin(x_{ij}),
$
which introduces a smooth nonlinear structure with the same sparsity pattern (see, e.g., \cite{park2022sparse}).
We examine several data-generating processes (DGPs) with different choices of $(n,p,k)$, feature correlation $\rho \in \{0.25, 0.75\}$, and noise level $\sigma \in \{0.10, 2.00\}$. Three representative scenarios are given below: 
\begin{enumerate}[label=(\roman*)]
\item DGP1: $(200, 20, 10)$  corresponding to a low-dimensional setting ($p<n$) with a partially sparse signal;
\item DGP2: $(1000, 200, 100)$ representing a larger scale low-dimensional setting ($p<n$) with a moderately dense signal;
\item DGP3: $(500, 2000, 100)$ describes a high-dimensional setting ($p \gg n$) with a relatively sparse signal.
\end{enumerate}
Each scenario is evaluated with both linear and nonlinear specifications of the regression function $f^\star$. To see the robustness of the proposed methods, other DGP scenarios are also considered, which includes very low sparsity level and larger sample sizes compared to $k$, to address high dimensionality. Results for these cases are reported in the Appendix.

\subsection{Model specifications}
We compare six penalization strategies. In all cases, the underlying model is a neural network, and the differences lie in the form of regularization applied to the network parameters. Specifically, we consider:   
(1) no regularization,  
(2) weight decay (ridge regression),  
(3) Lasso,  
(4) Elastic Net,  
(5) weight decay by integrating covariance structure among features (Covridge), and  
(6) two-parameter regularization combining sparsity and covariance penalties (Sparridge).  
Neural networks are implemented using a fixed architecture consisting of two fully connected hidden layers with 64 and 32 units, respectively. ReLU activation functions are used in both hidden layers. The output layer contains a single neuron for the regression output. We trained the model using the Adam optimizer, taking advantage of its adaptive learning rate and momentum mechanisms, and used the default hyperparameter settings throughout. Networks were trained for fixed\footnote{The simulation design could be extended to explore alternative settings, such as different numbers of epochs, batch sizes, or hidden-layer neurons. However, for this study, we fixed these hyperparameters to maintain a controlled experimental setup and to evaluate the performance of the methods consistently based on different DGPs. Exploring alternative hyperparameter choices, such as selecting the number of neurons via cross-validation to minimize MSE, is possible but was not used here due to computational resources and time constraints. The authors acknowledge that different settings may yield different absolute performance, but the relative comparison of methods remains informative under the fixed design.} epochs using a batch size of 32 and the mean squared error (MSE) loss function.
For each replication, the dataset was split into a training (75\%) and test (25\%) set, with features standardized based on the training data.  
Model performance was evaluated using MSE, mean absolute error (MAE), and prediction bias, averaged over 100 replications.
Hyperparameters were selected via k-fold cross-validation on the training data. For all regularization methods, tuning was carried out over the common grid of candidate values $\{0.001, 0.01, 0.1, 0.5, 0.9\}$. 
For single-parameter methods (Ridge, Lasso), the corresponding tuning parameter $\lambda$ was evaluated over this grid. For two-parameter methods (Elastic Net, Covridge, and Sparridge), all possible combinations of tuning parameters were evaluated using the same grid. The optimal hyperparameters were chosen by minimizing the cross-validated MSE, and the final networks were retrained on the full training sample using these selected values.

\subsection{Computer specifications}
All the computations in this study were performed on the Alvis high-performance computing cluster at C3SE, Sweden (\url{https://alvis.c3se.chalmers.se/}), a national NAISS resource. Simulations were performed on nodes equipped with dual Intel Xeon Gold 6338 CPUs running at 2.00 GHz, providing a total of 64 CPU cores per node and 256 GiB of RAM. GPU-accelerated computations were conducted on nodes equipped with four NVIDIA GPUs (T4, V100, A40, or A100, depending on availability), with 48 GiB VRAM per A40 GPU, using CUDA version 13.0 and NVIDIA driver version. All analyses were performed in Python, using \texttt{TensorFlow/Keras} for neural networks modeling and \texttt{scikit-learn} for data preprocessing and evaluation.

\subsection{Simulation results}

This subsection summarizes the results of the simulation study and compares the performance of the competing methods across the considered data-generating scenarios. Tables \ref{tab:simdgp1}--\ref{tab:simdgp3} present the empirical performance of various regression methods for DGP1--DGP3. It can be seen that, among all DGPs, regularization consistently improves predictive performance over the unregularized baseline, indicating the effect of shrinkage in high-dimensional settings and under both moderate and high correlation among predictors, as well as in the presence of noise. Ridge, Lasso, and Elastic Net generally improve upon the unregularized method, although their relative performance depends on the correlation structure and sparsity of the true signal. Ridge regression tends to perform better in highly correlated predictor settings $(\rho=0.75)$ or in dense signal designs, whereas Lasso performs better when the underlying coefficient vector is sparse. The Covridge and Sparridge estimators outperform Ridge and Lasso under certain conditions. Among nearly all DGPs, these methods provide the lowest or near-lowest MSE and MAE values. The improvement is significant in nonlinear and high-dimensional cases. Sparridge often yields the smallest errors, suggesting that its adaptive regularization promotes a favorable bias–variance tradeoff, especially when the true model exhibits partial sparsity or correlated signal groups.

The impact of omitting regularization is evident in the unregularized results, which often exhibit higher test MSE and MAE due to overfitting, suggesting that the network is fitting noise rather than capturing the underlying signal. As the noise level increases from 0.10 to 2.00, models with regularization show substantially lower performance loss in MSE, demonstrating their robustness to noise and multicollinearity.  Under DGP1 (Table \ref{tab:simdgp1}), both linear and nonlinear low-dimensional settings show that Covridge and Sparridge achieve the smallest test MSEs, even when predictors are highly correlated. Covridge and Sparridge also show consistent improvements over Elastic Net under both low- and high-noise settings, highlighting the benefits of covariance-based shrinkage. On the other hand, under DGP2 (Table \ref{tab:simdgp2}), with moderate sparsity and complex nonlinearities, Elastic Net and Lasso perform well, but Sparridge achieves the lowest or near-lowest prediction errors under both noise conditions.

DGP3 (Table \ref{tab:simdgp3}) corresponds to high-dimensional settings $p \gg n$. In this scenario, Sparridge remains stable and consistently achieves the lowest MSEs, particularly. By contrast, Lasso and Ridge perform well mainly in linear settings but exhibit higher MSE compared to Sparridge. Sparridge demonstrates strong performance across both linear and nonlinear models, benefiting from adaptive, data-informed regularization, and achieves lower prediction MSE than Lasso in nonlinear settings. This robustness suggests that Sparridge regularization robustly controls sparsity and bias. Covridge, however, performs poorly in high-dimensional linear models with many irrelevant predictors (Table \ref{tab:simdgp3}). This limitation appears to stem from its reliance on covariance-based weighting without an $\ell_1$ component, which limits its ability to exploit sparsity effectively. In our simulations, the unregularized method exhibits the highest test MSE, particularly in high-dimensional or in the presence of strong predictor correlation. Ridge and Lasso both reduce prediction error, with Lasso promoting sparsity, while Elastic Net balances sparsity and stability but does not account for feature covariance. Covridge incorporates covariance information into the penalty and therefore improves stability in correlated or dense predictor regimes. Sparridge combines sparsity-inducing regularization with covariance-aware shrinkage, leading to improved prediction performance and robustness. These results show that covariance-informed penalties offer clear advantages over standard $\ell_1$ and $\ell_2$ regularization in settings with strong predictor dependencies or high model complexity. Although the values of the performance measures (MSE, MAE, and bias) vary with the choice of hyperparameters, the relative performance ranking of the methods remains broadly consistent across the considered scenarios

\begin{table}[htbp]
\centering
\caption{Performance metrics (MSE, MAE, Bias) for different methods under two noise levels (0.10 and 2.00), correlation levels ($\rho=0.25$ and $0.75$), and linear/nonlinear models for DGP1.}
\resizebox{\textwidth}{!}{%
\begin{tabular}{lcccccccccccc}
\toprule
\multirow{2}{*}{Method} & \multicolumn{6}{c}{$\rho=0.25$} & \multicolumn{6}{c}{$\rho=0.75$} \\
\cmidrule(lr){2-7} \cmidrule(lr){8-13}
 & \multicolumn{2}{c}{MSE} & \multicolumn{2}{c}{MAE} & \multicolumn{2}{c}{Bias} & \multicolumn{2}{c}{MSE} & \multicolumn{2}{c}{MAE} & \multicolumn{2}{c}{Bias} \\
\cmidrule(lr){2-3} \cmidrule(lr){4-5} \cmidrule(lr){6-7}
\cmidrule(lr){8-9} \cmidrule(lr){10-11} \cmidrule(lr){12-13}
 & 0.10 & 2.00 & 0.10 & 2.00 & 0.10 & 2.00 & 0.10 & 2.00 & 0.10 & 2.00 & 0.10 & 2.00 \\
\midrule
\multicolumn{13}{l}{{Linear}} \\
\midrule
No Regularization & 0.817 & 5.165 & 0.703 & 1.731 & -0.173 & -0.366 & 1.018 & 5.405 & 0.780 & 1.745 & -0.397 & -0.547 \\
Ridge  & 0.766 & 5.004 & 0.668 & 1.686 & -0.172 & -0.412 & 0.939 & 5.238 & 0.757 & 1.717 & -0.387 & -0.566 \\
Lasso  & 0.817 & 4.810 & 0.672 & 1.652 & -0.185 & -0.387 & 1.006 & 5.219 & 0.783 & 1.724 & -0.398 & -0.581 \\
Elastic Net       & 0.717 & 4.774 & 0.641 & 1.646 & -0.181 & -0.384 & 0.925 & 5.194 & 0.758 & 1.707 & -0.390 & -0.577 \\
Covridge          & 0.683 & 4.399 & 0.560 & 1.571 & -0.141 & -0.375 & 0.728 & 4.629 & 0.600 & 1.619 & -0.274 & -0.489 \\
Sparridge          & 0.666 & 4.246 & 0.567 & 1.558 & -0.132 & -0.369 & 0.668 & 4.578 & 0.577 & 1.620 & -0.245 & -0.466 \\
\midrule
\multicolumn{13}{l}{{Nonlinear}} \\
\midrule
No Regularization & 0.367 & 5.123 & 0.481 & 1.772 & -0.056 & -0.215 & 0.333 & 4.940 & 0.460 & 1.740 & -0.084 & -0.179 \\
Ridge  & 0.233 & 4.795 & 0.372 & 1.717 & -0.049 & -0.175 & 0.197 & 4.888 & 0.351 & 1.726 & -0.074 & -0.213 \\
Lasso & 0.311 & 4.739 & 0.441 & 1.712 & -0.070 & -0.206 & 0.262 & 4.921 & 0.407 & 1.733 & -0.083 & -0.194 \\
Elastic Net       & 0.224 & 4.898 & 0.365 & 1.732 & -0.047 & -0.165 & 0.207 & 4.864 & 0.361 & 1.718 & -0.071 & -0.217 \\
Covridge          & 0.180 & 4.512 & 0.306 & 1.652 & -0.009 & -0.160 & 0.126 & 4.342 & 0.271 & 1.617 & -0.041 & -0.168 \\
Sparridge          & 0.188 & 4.343 & 0.319 & 1.633 & -0.008 & -0.147 & 0.133 & 4.286 & 0.279 & 1.611 & -0.046 & -0.173 \\
\bottomrule
\end{tabular}%
}
\label{tab:simdgp1}
\end{table}

\begin{table}[H]
\centering
\caption{Performance metrics (MSE, MAE, Bias) for different methods under two noise levels (0.10 and 2.00), correlation levels ($\rho=0.25$ and $0.75$), and linear/nonlinear models for DGP2.}
\resizebox{\textwidth}{!}{%
\begin{tabular}{lcccccccccccc}
\toprule
\multirow{2}{*}{Method} & \multicolumn{6}{c}{$\rho=0.25$} & \multicolumn{6}{c}{$\rho=0.75$} \\
\cmidrule(lr){2-7} \cmidrule(lr){8-13}
 & \multicolumn{2}{c}{MSE} & \multicolumn{2}{c}{MAE} & \multicolumn{2}{c}{Bias} & \multicolumn{2}{c}{MSE} & \multicolumn{2}{c}{MAE} & \multicolumn{2}{c}{Bias} \\
\cmidrule(lr){2-3} \cmidrule(lr){4-5} \cmidrule(lr){6-7}
\cmidrule(lr){8-9} \cmidrule(lr){10-11} \cmidrule(lr){12-13}
 & 0.10 & 2.00 & 0.10 & 2.00 & 0.10 & 2.00 & 0.10 & 2.00 & 0.10 & 2.00 & 0.10 & 2.00 \\
\midrule
\multicolumn{13}{l}{\textbf{Linear}} \\
\midrule
No Regularization & 35.733 & 41.460 & 4.691 & 5.085 & 0.105 & 0.178 & 23.805 & 28.210 & 3.820 & 4.180 & 0.038 & 0.138 \\
Ridge   & 8.034 & 17.811 & 1.966 & 3.143 & 0.004 & 0.102 & 14.498 & 18.867 & 2.890 & 3.319 & -0.050 & 0.033 \\
Lasso  & 4.488 & 12.197 & 1.212 & 2.542 & 0.020 & 0.001 & 10.923 & 16.286 & 2.568 & 3.128 & 0.139 & 0.145 \\
Elastic Net       & 4.190 & 12.230 & 1.222 & 2.546 & 0.002 & 0.001 & 10.797 & 16.247 & 2.551 & 3.119 & 0.111 & 0.104 \\
Covridge          & 5.654 & 13.217 & 1.864 & 2.826 & -0.069 & 0.051 & 6.407 & 16.590 & 1.981 & 3.221 & 0.013 & 0.054 \\
Sparridge          & 5.716 & 11.572 & 1.793 & 2.679 & -0.015 & -0.021 & 6.324 & 12.436 & 1.971 & 2.797 & -0.057 & 0.065 \\
\midrule
\multicolumn{13}{l}{\textbf{Nonlinear}} \\
\midrule
No Regularization & 12.135 & 17.556 & 2.740 & 3.341 & -0.080 & -0.025 & 8.563 & 14.263 & 2.310 & 2.993 & -0.155 & -0.133 \\
Ridge   & 1.558 & 8.462 & 0.831 & 2.294 & -0.038 & 0.020 & 1.191 & 7.974 & 0.727 & 2.216 & -0.091 & -0.064 \\
Lasso  & 1.679 & 8.437 & 0.896 & 2.259 & -0.037 & 0.013 & 1.028 & 7.611 & 0.547 & 2.131 & -0.078 & -0.086 \\
Elastic Net       & 1.287 & 8.398 & 0.694 & 2.256 & -0.025 & -0.031 & 1.000 & 8.178 & 0.596 & 2.149 & -0.067 & -0.178 \\
Covridge          & 2.870 & 9.176 & 1.323 & 2.388 & 0.031 & -0.023 & 1.876 & 8.612 & 1.018 & 2.305 & -0.076 & -0.059 \\
Sparridge          & 2.917 & 8.151 & 1.241 & 2.265 & -0.067 & 0.013 & 1.386 & 7.220 & 0.828 & 2.130 & -0.088 & -0.086 \\
\bottomrule
\end{tabular}%
}
\label{tab:simdgp2}
\end{table}


\begin{table}[H]
\centering
\caption{Performance metrics (MSE, MAE, Bias) for different methods under two noise levels (0.10 and 2.00), correlation levels ($\rho=0.25$ and $0.75$), and linear/nonlinear models for DGP3.}
\resizebox{\textwidth}{!}{%
\begin{tabular}{lcccccccccccc}
\toprule
\multirow{2}{*}{Method} & \multicolumn{6}{c}{$\rho=0.25$} & \multicolumn{6}{c}{$\rho=0.75$} \\
\cmidrule(lr){2-7} \cmidrule(lr){8-13}
 & \multicolumn{2}{c}{MSE} & \multicolumn{2}{c}{MAE} & \multicolumn{2}{c}{Bias} & \multicolumn{2}{c}{MSE} & \multicolumn{2}{c}{MAE} & \multicolumn{2}{c}{Bias} \\
\cmidrule(lr){2-3} \cmidrule(lr){4-5} \cmidrule(lr){6-7}
\cmidrule(lr){8-9} \cmidrule(lr){10-11} \cmidrule(lr){12-13}
 & 0.10 & 2.00 & 0.10 & 2.00 & 0.10 & 2.00 & 0.10 & 2.00 & 0.10 & 2.00 & 0.10 & 2.00 \\
\midrule
\multicolumn{13}{l}{\textbf{Linear}} \\
\midrule
No Regularization & 105.781 & 260.417 & 7.725 & 12.220 & 0.916 & 1.135 & 99.508 & 105.251 & 7.532 & 7.717 & 0.928 & 0.917 \\
Ridge   & 90.684 & 238.620 & 7.239 & 11.789 & 0.680 & 1.204 & 82.541 & 90.408 & 6.969 & 7.237 & 0.733 & 0.734 \\
Lasso   & 54.655 & 148.675 & 5.712 & 9.352 & 0.737 & 1.137 & 48.016 & 55.138 & 5.352 & 5.714 & 0.846 & 0.711 \\
Elastic Net       & 55.059 & 148.596 & 5.730 & 9.360 & 0.719 & 1.159 & 48.523 & 55.070 & 5.403 & 5.726 & 0.821 & 0.702 \\
Covridge          & 100.532 & 247.614 & 7.502 & 12.045 & 0.657 & 0.878 & 91.906 & 101.342 & 7.274 & 7.513 & 0.725 & 0.641 \\
Sparridge          & 52.284 & 140.897 & 5.478 & 9.030 & 0.733 & 1.178 & 46.690 & 52.754 & 5.165 & 5.516 & 0.701 & 0.769 \\
\midrule
\multicolumn{13}{l}{\textbf{Nonlinear}} \\
\midrule
No Regularization & 133.554 & 141.590 & 9.127 & 9.341 & 0.897 & 0.938 & 68.726 & 75.126 & 6.424 & 6.727 & 0.399 & 0.386 \\
Ridge   & 118.516 & 126.212 & 8.651 & 8.872 & 0.717 & 0.729 & 60.782 & 67.915 & 6.075 & 6.483 & 0.195 & 0.204 \\
Lasso  & 64.477  & 70.197  & 6.363 & 6.592 & 0.203 & 0.156 & 38.934 & 43.902 & 4.927 & 5.212 & 0.180 & 0.194 \\
Elastic Net       & 64.196  & 69.841  & 6.352 & 6.566 & 0.167 & 0.124 & 38.647 & 45.412 & 4.904 & 5.274 & 0.234 & 0.205 \\
Covridge          & 123.573 & 133.362 & 8.806 & 9.075 & 0.719 & 0.706 & 63.058 & 71.300 & 6.191 & 6.620 & 0.184 & 0.226 \\
Sparridge          & 54.532  & 61.408  & 5.870 & 6.220 & 0.327 & 0.192 & 29.416 & 37.425 & 4.278 & 4.827 & 0.315 & 0.150 \\
\bottomrule
\end{tabular}%
}
\label{tab:simdgp3}
\end{table}

\section{Real-World Applications} \label{sec:application}
\subsection{Energy Dataset: Cooling load prediction in buildings}
In this section, we evaluate the performance of different regularized neural network regression methods on a real-world energy dataset, focusing on predicting the cooling load of buildings. The dataset is obtained from the UCI Machine Learning Repository \citep{tsanas2012accurate}. The dataset consists of 768 simulated buildings generated using Ecotect software, representing twelve different building shapes. These buildings vary in glazing area, glazing distribution, orientation, and other architectural parameters. The dataset includes eight input features and one target variable (cooling load, measured in kWh/m$^2$). The features are: relative compactness (X1), surface area (X2), wall area (X3), roof area (X4), overall height (X5), orientation (X6), glazing area (X7), and glazing area distribution (X8). Figure~\ref{fig:all_features} presents scatter plots showing the relationships between cooling load and each input variable. The nonlinear nature of the dataset motivates the use of neural networks but makes them prone to overfitting, particularly when the available sample size is limited. Therefore, this application focuses on evaluating how different regularization strategies can control overfitting and enhance generalization. While neural networks are often regarded as black-box models, applying these regularization methods provides better interpretability in terms of model stability and predictive robustness.

\begin{figure}[ht]
\centering
\begin{subfigure}[b]{0.23\textwidth}
    \includegraphics[width=\textwidth]{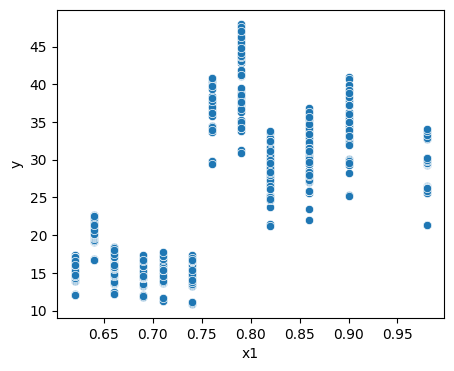}
    \caption{X1: Compactness}
\end{subfigure}
\begin{subfigure}[b]{0.23\textwidth}
    \includegraphics[width=\textwidth]{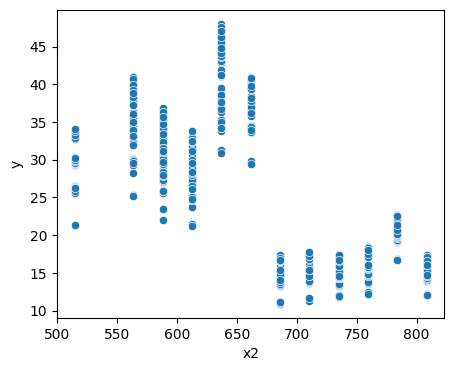}
    \caption{X2: Surface area}
\end{subfigure}
\begin{subfigure}[b]{0.23\textwidth}
    \includegraphics[width=\textwidth]{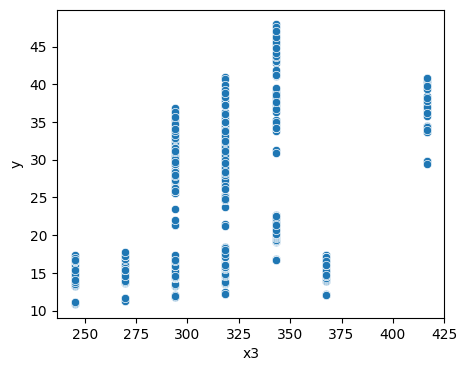}
    \caption{X3: Wall area}
\end{subfigure}
\begin{subfigure}[b]{0.23\textwidth}
    \includegraphics[width=\textwidth]{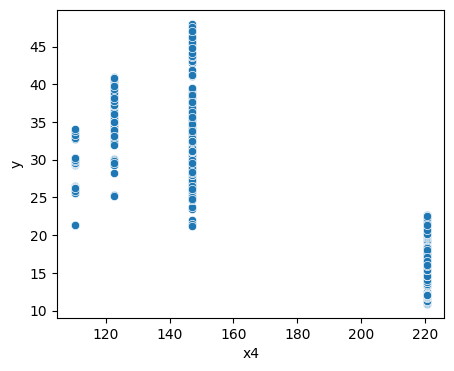}
    \caption{X4: Roof area}
\end{subfigure}

\begin{subfigure}[b]{0.23\textwidth}
    \includegraphics[width=\textwidth]{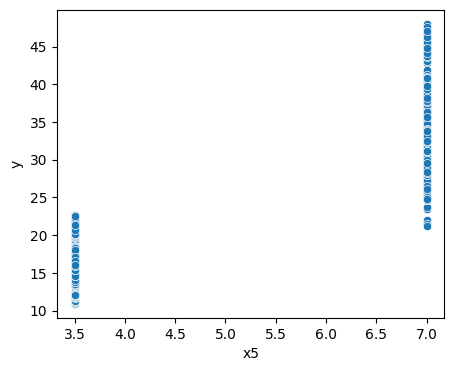}
    \caption{X5: Height}
\end{subfigure}
\begin{subfigure}[b]{0.23\textwidth}
    \includegraphics[width=\textwidth]{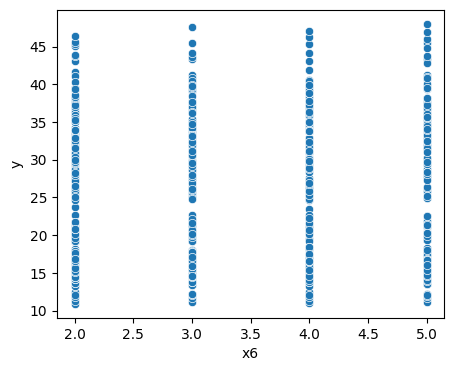}
    \caption{X6: Orientation}
\end{subfigure}
\begin{subfigure}[b]{0.23\textwidth}
    \includegraphics[width=\textwidth]{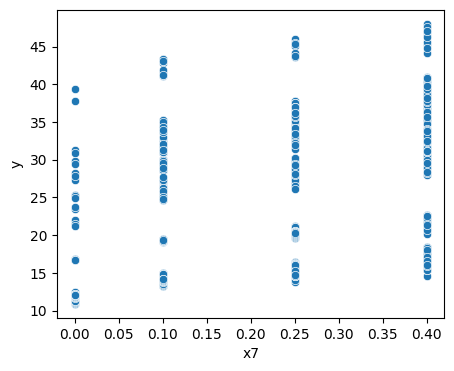}
    \caption{X7: Glazing area}
\end{subfigure}
\begin{subfigure}[b]{0.23\textwidth}
    \includegraphics[width=\textwidth]{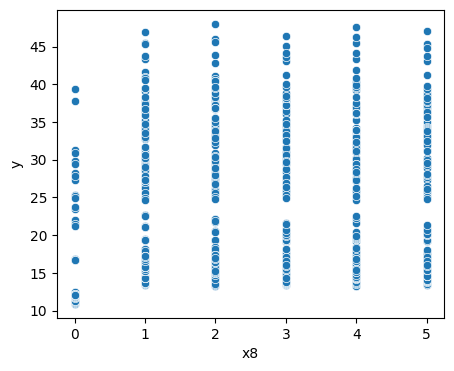}
    \caption{X8:  Pattern}
\end{subfigure}
\caption{Scatter plots of cooling load versus all eight input features. }
\label{fig:all_features}
\end{figure}

The dataset shows strong nonlinearity, confirmed by the Ramsey regression equation specification error test \citep{ramsey1969tests} ($F=48.90$, $p<10^{-12}$). This makes it an appropriate benchmark for evaluating regularized neural networks. We trained a feedforward neural network with two hidden layers, 64 neurons in the first layer and 32 in the second. The number of hidden layers and neurons per layer can be large in modern statistical learning settings \citep{gareth2023introduction}. Complex neural network architectures allow the model to capture highly nonlinear patterns, but they also increase the risk of overfitting. Regularization techniques are therefore essential to control model complexity and ensure stable generalization. The ReLU activation function and Adam optimizer were used to train the neural network. The network was trained for 500 epochs, with 30\% of the data reserved for testing. All neural networks were trained using MSE as the loss function. Regularization hyperparameters were tuned through a grid search using 10-fold cross-validation. Model performance was evaluated using MSE, MAE, RMSE, and $R^2$. The results are summarized in Table~\ref{tab:real_results}.

\begin{table}[ht]
\centering
\caption{Performance comparison of different methods on the cooling load prediction task.}
\begin{tabular}{lcccc}
\hline
Method & MSE & MAE & R$^2$ & RMSE\\
\hline
No Regularization & 1.924 & 0.949 & 0.979 & 1.387 \\
Ridge & 1.716 & 0.862 & 0.981 & 1.310 \\
Lasso & 1.877 & 0.937 & 0.979 & 1.370 \\
Elastic Net & 1.810 & 0.909 & 0.980 & 1.345 \\
Covridge & 1.494 & 0.840 & 0.984 & 1.222 \\
Sparridge & 1.684 & 0.873 & 0.982 & 1.298 \\
\hline
\end{tabular}
\label{tab:real_results}
\end{table}

As shown in Table~\ref{tab:real_results}, regularized (particularly Covridge and Sparridge) methods achieved the lowest MSE and the highest $R^2$, indicating superior generalization compared to the no regularization. These methods captured the nonlinear dependencies present in the dataset, achieving better predictive accuracy than reported in previous studies \citep{tsanas2012accurate, xu2022prediction}.  The performance ranking of the regularized methods can vary depending on the tuning of hyperparameters, network architecture, and training epochs. The number of training epochs was set to 500 for neural networks to ensure comparability across methods. Across all settings, regularization yields substantial gains in predictive accuracy relative to the unregularized baseline. While Lasso attains the lowest test MSE under early stopping in some configurations, Covridge and Sparridge remain highly competitive, particularly as model complexity increases. This pattern is consistent with the simulation evidence, where the proposed geometry-aware penalties provide significant improvements in high-dimensional settings. These findings suggest that practitioners should carefully select model configurations to achieve optimal predictive performance. This application demonstrates that proposed covariance-based regularization methods, such as Covridge and Sparridge, can substantially improve the generalization performance of deep neural networks when modeling complex nonlinear energy datasets.

\subsection{High-dimensional biological classification}

The benefit of regularization in high-dimensional settings is illustrated using the GSE9476 microarray gene expression dataset \citep{feltes2019cumida}, which is used here for the classification of leukemia and progenitor cell types. The dataset is publicly available on Kaggle\footnote{https://www.kaggle.com/datasets/brunogrisci/leukemia-gene-expression-cumida/data} and consists of $64$ samples measured over $22000$ gene expression features. The primary objective is a five-class classification problem distinguishing between bone marrow, peripheral blood, CD34+ cell lines, and acute myeloid leukemia samples. Such data present a challenging learning scenario due to extreme feature dimensionality relative to sample size, making regularization important for controlling model complexity and providing reliable generalization. Before model training, the genes exhibiting the strongest associations with the class labels are selected using the ANOVA F test. All selected 2000 features were standardized to a zero mean and unit variance. A multi-layer perceptron with two hidden layers of 8 and 4 units was used as the base classifier. We apply ReLU activation functions and a softmax activation at the output layer. The network was trained using the Adam optimizer with a batch size of $16$ for up to $500$ epochs, and early stopping based on validation loss (patience = 10) was applied to prevent overfitting and retain the model parameters corresponding to the best performance.

 We applied a two-stage evaluation procedure wherein the first stage, the regularization parameter $\lambda$ for each method was optimized using k-fold cross-validation over a candidate set of values ($\lambda \in [0.0001, 1.0]$). For Lasso ($\ell_1$) and Ridge ($\ell_2$) regularization, a single penalty parameter $\lambda$ was tuned. Whereas, Elastic Net, Covridge, and Sparridge were formulated using two independent regularization parameters. In the second stage, the optimally tuned hyperparameters were evaluated using a robust repeated cross-validation scheme consisting of $10$ repetitions of k-fold cross-validation, minimizing dependence on a single random data split. The primary performance metric used is balanced accuracy, which measures class imbalance by computing the average accuracy among all classes. Given the small sample size and unequal class counts in this five-class problem, this metric provides a robust assessment of the model's ability to correctly classify each cell type. In this way, it reduces bias from class imbalance by averaging the recall obtained for each class. The robust performance results ($\text{mean} \pm \text{standard deviation}$) of the neural network classifiers, measured by test accuracy for each method, are summarized in Table \ref{tab:cv_results}.

\begin{table}[tb]
\centering
\caption{Robust performance of neural network classifiers on the GSE9476 dataset}
\begin{tabular}{lcc}
\toprule
{Method} & {Accuracy ($\mu \pm \sigma$)} \\
\midrule
No Regularization &  $0.8415 \pm 0.0610$ \\
Lasso   & $0.9671 \pm 0.0202$ \\
Ridge   & $0.8664 \pm 0.0438$ \\
Elastic Net & $0.9181 \pm 0.0423$ \\
Covridge  &  $0.9718 \pm 0.0199$ \\
Sparridge  & $0.9673 \pm 0.0255$ \\  
\bottomrule
\end{tabular}
\label{tab:cv_results}
\end{table}
The results in Table~\ref{tab:cv_results} demonstrate the value of regularization in high-dimensional gene expression classification. Regularization not only improves average test accuracy but also reduces variability across repeated cross-validation, as shown by the smaller standard deviations, particularly for Covridge ($\sigma=0.0199$).  Lasso achieves a good test accuracy ($0.9671 \pm 0.0202$) relative to Ridge ($0.8664 \pm 0.0438$), indicating the advantage of sparsity over simple weight shrinkage in this dataset. Elastic Net shows intermediate performance ($0.9181 \pm 0.0423$), consistent with its role in balancing sparsity and $\ell_2$ regularization.  

Our proposed methods, Covridge and Sparridge, further improve predictive performance by incorporating dual-penalty regularization. Covridge, which modifies the standard $\ell_2$  penalty via a weighted covariance transformation, achieves the highest test accuracy ($0.9718 \pm 0.0199$), indicating that accounting for feature correlations can improve generalization. This represents a substantial gain of approximately 13\% over the baseline method without regularization. Sparridge, which combines a sparsity-inducing penalty with covariance-based regularization, also attains high accuracy ($0.9673 \pm 0.0255$), confirming that combining sparsity and feature structure is effective.  

These results indicate that dual-penalty and weighted regularization provide a robust alternative to standard regularization methods. Both Covridge and Sparridge not only achieve superior accuracy but also maintain low variability, highlighting their suitability for robust high-dimensional classification. The performance of these regularized neural networks illustrates that carefully designed regularization strategies can control model complexity and enhance generalization in real-world diagnostic applications.

\section{Concluding Remarks} \label{sec:conc}

With increasing dataset sizes and feature dimensionality, neural network models involve a large number of parameters, which makes careful training crucial to avoid overfitting and to achieve reliable predictions. Neural networks are an important framework in machine learning, and their predictive performance can be substantially improved through appropriate regularization techniques. In this paper, we reviewed existing regularization methods, including Ridge, Lasso, and Elastic Net, and proposed two approaches, Covridge and Sparridge, which incorporate adaptive penalties based on feature covariance and sparsity. We evaluated the performance of all methods under different conditions of feature correlation, sparsity, and noise using simulation studies and real-life applications. The results showed that the regularized methods consistently achieve lower prediction errors and reduced bias compared to standard feedforward neural networks without regularization. The performance of Covridge and Sparridge is superior to that of other regularized methods, particularly in high-dimensional or highly correlated settings. In addition, an application to an energy dataset for building cooling load prediction further confirmed the practical utility of the proposed methods. Regularized neural networks were able to capture nonlinear relationships between building characteristics and cooling load, producing more accurate predictions than models without regularization. In addition to the regression problems, we also evaluated the proposed methods on a high-dimensional biological classification using the gene expression dataset. The classification results indicate that regularization plays a crucial role in improving predictive accuracy and reducing performance variability. 

In this work, we focused on $\ell_1$- and $\ell_2$-based regularization methods, including Ridge, Lasso, Elastic Net, and our proposed Covridge and Sparridge. These penalties are widely used and easy to interpret, which allows us to clearly evaluate the benefits of adaptive regularization. We did not include all existing regularizers, as comparing every method would have required substantially more time and computational resources. The proposed methods are flexible and can be extended in future work to incorporate other types of penalties or hybrid strategies, offering opportunities for further improvement in predictive performance. The simulation study in this work focused on regression problems; extending the proposed methods to classification tasks and evaluating them via simulation represents a natural direction for future research. However, in a real-life application to biological classification, we demonstrated the benefits of our methods on a high-dimensional classification problem using real data. Covridge and Sparridge can be computationally intensive for very high-dimensional datasets, which may require more efficient implementations. Future work could also analyze the application of these regularization methods to deeper or more complex architectures, such as convolutional or recurrent neural networks for image, temporal, or textural data.

\section*{Acknowledgments}
We acknowledge the National Academic Infrastructure for Supercomputing in Sweden (NAISS) for providing the computational resources used in this study, including access to the Alvis, Mimer, and Dardel systems.

\newpage
\bibliographystyle{plainnat} 
\bibliography{references}

\newpage
\appendix
\renewcommand{\theequation}{A\arabic{equation}}
\setcounter{equation}{0}

\renewcommand{\thetable}{A\arabic{table}}
\setcounter{table}{0}

\section*{Supplementary Materials}
\section{Additional simulation results} \label{supp:sim_results}

Table \ref{tab:simdgp4} reports the results of medium-dimensional ($p<n$) with no sparsity. All regularized methods significantly outperform the unregularized baseline. Covridge and Sparridge yield the lowest MSEs when the model is linear. Elastic Net and Ridge remain competitive, under nonlinear dependence, but Sparridge’s adaptability yields a small yet consistent advantage. Table \ref{tab:simdgp5} reports results for a high-dimensional setting with higher sparsity. It can be seen that Sparridge demonstrates the strongest performance in nonlinear models, indicating that its regularization controls both sparsity and bias for complex data structures. For linear models, Lasso performs well, particularly when feature correlation is low, while Elastic Net and Sparridge show similar performance under these conditions. Covridge performs poorly in high-dimensional linear settings with many irrelevant predictors. This limitation is because of the use of covariance weighting without an $\ell_1$ component, which prevents it from exploiting sparsity. Although tuning the shrinkage parameter or increasing optimization epochs may improve results slightly, the absence of a sparsity-inducing component fundamentally constrains its performance. In nonlinear settings, Covridge often outperforms standard unregularized networks because its covariance-adaptive shrinkage stabilizes coefficient estimates under model misspecification.


\begin{table}[H]
\centering
\caption{Performance metrics (MSE, MAE, Bias) for different methods under two noise levels (0.10 and 2.00), correlation levels ($\rho=0.25$ and $0.75$), and linear/nonlinear models when $n=1000$, $p=200$ and $k=200$.}
\resizebox{\textwidth}{!}{%
\begin{tabular}{lcccccccccccc}
\toprule
\multirow{2}{*}{Method} & \multicolumn{6}{c}{$\rho=0.25$} & \multicolumn{6}{c}{$\rho=0.75$} \\
\cmidrule(lr){2-7} \cmidrule(lr){8-13}
 & \multicolumn{2}{c}{MSE} & \multicolumn{2}{c}{MAE} & \multicolumn{2}{c}{Bias} & \multicolumn{2}{c}{MSE} & \multicolumn{2}{c}{MAE} & \multicolumn{2}{c}{Bias} \\
\cmidrule(lr){2-3} \cmidrule(lr){4-5} \cmidrule(lr){6-7}
\cmidrule(lr){8-9} \cmidrule(lr){10-11} \cmidrule(lr){12-13}
 & 0.10 & 2.00 & 0.10 & 2.00 & 0.10 & 2.00 & 0.10 & 2.00 & 0.10 & 2.00 & 0.10 & 2.00 \\
\midrule
\multicolumn{13}{l}{\textbf{Linear}} \\
\midrule
No Regularization & 61.701 & 63.971 & 6.182 & 6.361 & 0.008 & 0.009 & 35.095 & 41.093 & 4.706 & 5.187 & -0.038 & -0.094 \\
Ridge   & 27.636 & 33.795 & 3.759 & 4.414 & 0.101 & 0.001 & 24.482 & 30.621 & 3.934 & 4.456 & -0.146 & -0.212 \\
Lasso   & 23.681 & 30.601 & 3.501 & 4.208 & 0.077 & 0.064 & 22.829 & 30.361 & 3.838 & 4.445 & -0.012 & -0.095 \\
Elastic Net       & 20.261 & 27.199 & 3.062 & 3.901 & 0.032 & 0.044 & 22.687 & 30.179 & 3.781 & 4.418 & 0.013 & -0.183 \\
Covridge          & 13.150 & 21.806 & 2.438 & 3.386 & -0.034 & 0.097 & 17.634 & 29.080 & 2.972 & 4.119 & 0.090 & 0.046 \\
Sparridge          & 12.652 & 21.295 & 2.433 & 3.419 & -0.024 & -0.029 & 15.563 & 24.994 & 2.795 & 3.700 & 0.045 & 0.159 \\
\midrule
\multicolumn{13}{l}{\textbf{Nonlinear}} \\
\midrule
No Regularization & 22.267 & 25.226 & 3.679 & 3.916 & -0.232 & -0.272 & 14.759 & 19.257 & 3.004 & 3.409 & -0.157 & -0.183 \\
Ridge   & 5.522 & 11.552 & 1.231 & 2.493 & 0.006 & -0.062 & 3.065 & 10.418 & 0.934 & 2.406 & -0.103 & -0.121 \\
Lasso   & 12.053 & 12.940 & 1.764 & 2.704 & -0.061 & -0.014 & 4.251 & 11.772 & 1.218 & 2.637 & -0.138 & -0.119 \\
Elastic Net       & 5.260 & 11.299 & 1.167 & 2.473 & 0.028 & 0.026 & 3.014 & 10.213 & 0.931 & 2.386 & -0.033 & -0.092 \\
Covridge          & 5.982 & 12.608 & 1.838 & 2.770 & 0.032 & 0.015 & 3.895 & 11.367 & 1.479 & 2.632 & -0.008 & -0.067 \\
Sparridge          & 6.261 & 12.494 & 1.888 & 2.782 & -0.010 & 0.014 & 4.019 & 10.857 & 1.502 & 2.588 & -0.039 & -0.023 \\
\bottomrule
\end{tabular}%
}
\label{tab:simdgp4}
\end{table}

\begin{table}[H]
\centering
\caption{Performance metrics (MSE, MAE, Bias) for different methods under two noise levels (0.10 and 2.00), correlation levels ($\rho=0.25$ and $0.75$), and linear/nonlinear models when $n=1000$, $p=2000$ and $k=400$.}
\resizebox{\textwidth}{!}{%
\begin{tabular}{lcccccccccccc}
\toprule
\multirow{2}{*}{Method} & \multicolumn{6}{c}{$\rho=0.25$} & \multicolumn{6}{c}{$\rho=0.75$} \\
\cmidrule(lr){2-7} \cmidrule(lr){8-13}
 & \multicolumn{2}{c}{MSE} & \multicolumn{2}{c}{MAE} & \multicolumn{2}{c}{Bias} & \multicolumn{2}{c}{MSE} & \multicolumn{2}{c}{MAE} & \multicolumn{2}{c}{Bias} \\
\cmidrule(lr){2-3} \cmidrule(lr){4-5} \cmidrule(lr){6-7}
\cmidrule(lr){8-9} \cmidrule(lr){10-11} \cmidrule(lr){12-13}
 & 0.10 & 2.00 & 0.10 & 2.00 & 0.10 & 2.00 & 0.10 & 2.00 & 0.10 & 2.00 & 0.10 & 2.00 \\
\midrule
\multicolumn{13}{l}{\textbf{Linear}} \\
\midrule
No Regularization & 1320.031 & 1348.932 & 28.720 & 29.002 & -3.938 & -3.820 & 856.400 & 991.487 & 23.546 & 25.174 & -2.073 & -2.770 \\
Ridge   & 1238.485 & 1263.509 & 27.979 & 28.315 & -3.759 & -3.529 & 571.305 & 747.611 & 19.192 & 21.985 & -2.050 & -2.530 \\
Lasso   & 944.778  & 976.939  & 24.215 & 24.660 & -3.650 & -3.432 & 239.821 & 444.553 & 12.303 & 16.716 & -2.153 & -2.567 \\
Elastic Net       & 945.588  & 965.843  & 24.190 & 24.526 & -3.582 & -3.324 & 240.870 & 446.492 & 12.345 & 16.744 & -2.123 & -2.659 \\
Covridge          & 1328.511 & 1369.331 & 28.324 & 28.851 & -3.975 & -4.026 & 1406.602 & 1416.672 & 29.508 & 29.758 & -2.837 & -3.215 \\
Sparridge          & 946.686  & 963.278  & 24.192 & 24.440 & -3.962 & -3.716 & 321.510 & 512.465 & 14.282 & 17.999 & -1.507 & -1.635 \\
\midrule
\multicolumn{13}{l}{\textbf{Nonlinear}} \\
\midrule
No Regularization & 639.376 & 657.760 & 20.019 & 20.247 & -3.254 & -3.073 & 304.187 & 316.852 & 13.660 & 13.917 & -2.050 & -1.926 \\
Ridge   & 603.160 & 626.253 & 19.438 & 19.748 & -2.913 & -2.819 & 286.541 & 302.113 & 13.124 & 13.452 & -1.857 & -1.754 \\
Lasso   & 429.439 & 456.032 & 16.640 & 17.064 & -2.863 & -2.631 & 212.552 & 231.674 & 11.564 & 11.952 & -2.058 & -1.853 \\
Elastic Net       & 423.686 & 450.654 & 16.530 & 17.004 & -2.834 & -2.679 & 211.422 & 231.185 & 11.527 & 11.973 & -2.055 & -1.891 \\
Covridge          & 599.965 & 614.630 & 19.482 & 19.742 & -3.257 & -3.150 & 287.684 & 302.724 & 13.152 & 13.518 & -2.023 & -1.829 \\
Sparridge          & 382.060 & 399.945 & 15.576 & 15.869 & -2.619 & -2.513 & 183.089 & 202.216 & 10.605 & 11.092 & -1.539 & -1.471 \\
\bottomrule
\end{tabular}%
}
\label{tab:simdgp5}
\end{table}


\end{document}

%% file: preamble.tex
\usepackage{setspace}
\usepackage{lipsum}
\usepackage{amsmath}
\usepackage{amssymb}
\usepackage{amsthm}
\usepackage{forloop}
\usepackage{nicefrac}
\usepackage{mathtools}
\usepackage[table]{xcolor}
\usepackage{url}
\usepackage{bm}
\usepackage{float}
\usepackage{hyperref}
\hypersetup{
    colorlinks,
    linkcolor={blue!100!black!70},
    citecolor={blue!75!black},
    urlcolor={blue!75!black}
}
\usepackage{makecell}
\usepackage[normalem]{ulem}
\usepackage[nameinlink,capitalise]{cleveref}
\usepackage[shortlabels]{enumitem}
\usepackage{tikz}
\usetikzlibrary{positioning}
\usetikzlibrary{arrows.meta}
\usepackage{cancel}
\usepackage{multirow}
\usepackage{makecell}
\usepackage{caption}
\usepackage{subcaption}
\usepackage{framed}
\usepackage{wrapfig}
\usepackage{booktabs}
\usepackage{mathrsfs}
\usepackage[linesnumbered, ruled]{algorithm2e}
\usepackage[round,sort&compress]{natbib}
\usepackage[splitrule]{footmisc}
\usepackage{booktabs}
\usepackage{amsfonts}
\usepackage{amsmath, amssymb, amsthm, amsfonts, mathtools, bbm, bm, graphics, caption, float, latexsym, titlesec, enumitem, comment}
\usepackage{amssymb} 
\usepackage{tikz}
\usetikzlibrary{positioning}
\usepackage{booktabs}
\usepackage{siunitx}
\usepackage{caption}
\usepackage{adjustbox}
\usepackage{varwidth}
\usepackage{graphicx}
\usepackage{subcaption}
\usepackage{tabularx}


\newtheorem{theorem}{Theorem}[section]

\newtheorem{remark}{Remark}[section]

\crefname{definition}{Def.}{Defs.}
\crefname{equation}{Eq.}{Eqs.}

\newlist{assump}{enumerate}{1}
\setlist[assump]{label=(A\arabic*), ref=A\arabic*, resume}

\newcommand{\defcal} [1]{\expandafter\newcommand\csname cal#1\endcsname{{\mathcal #1}}}
\newcommand{\defsf} [1]{\expandafter\newcommand\csname sf#1\endcsname{{\mathsf #1}}}
\newcommand{\defbf} [1]{\expandafter\newcommand\csname bf#1\endcsname{{\mathbf #1}}}
\newcommand{\defbb} [1]{\expandafter\newcommand\csname bb#1\endcsname{{\mathbb{#1}}}}
\newcommand{\deffrak} [1]{\expandafter\newcommand\csname frak#1\endcsname{{\mathfrak{#1}}}}

\newcounter{ct}
\forLoop{1}{26}{ct}{
    \edef\letter{\Alph{ct}}
    \expandafter\defcal\letter
    \expandafter\defsf\letter
    \expandafter\defbf\letter
    \expandafter\defbb\letter
    \expandafter\deffrak\letter
}
\forLoop{1}{26}{ct}{
    \edef\letter{\alph{ct}}
    \expandafter\defcal\letter
    \expandafter\defsf\letter
    \expandafter\defbf\letter
    \expandafter\defbb\letter
    \expandafter\deffrak\letter
}

\newcommand{\mirrorfunc}[1]{
    \ensuremath{\if$#1$\phi \else \phi(#1)\fi}
}
\newcommand{\mirrorfuncdual}[1]{
    \ensuremath{\if$#1$\phi^{*}\else \phi^{*}(#1)\fi}
}

\newcommand{\subscript}[2]{$#1 _ #2$}
\newlist{assumplist}{enumerate}{1}
\setlist[assumplist]{label=\subscript{\textbf{\textsf{A}}}{\textsf{{\arabic*}}},leftmargin=*, itemsep=0pt}

\newcommand{\potential}[1]{
    \ensuremath{\if$#1$f\else f(#1)\fi}
}
\newcommand{\metric}[1]{
    \ensuremath{\if$#1$\mathscr{G}\else \mathscr{G}(#1)\fi}
}

\makeatletter
\newcommand*{\algotitle}[2]{%
  \stepcounter{algocf}%
  \hypertarget{algocf.title.\theHalgocf}{}%
  \NR@gettitle{#1}%
  \label{#2}%
  \addtocounter{algocf}{-1}%
}
\makeatother